\definecolor{RefColor}{rgb}{0,0,.85}
\newcommand{\defnphrase}[1]{\emph{#1}}
\newcommand{\defeq}{\coloneqq}
\newcommand{\Reals}{\mathbb{R}}
\newcommand{\Nats}{\mathbb{N}}
\newcommand{\NNReals}{\Reals_{+}}
\newcommand{\edges}{e}
\newcommand{\vertices}{v}
\newcommand{\EE}{\mathbb{E}}
\renewcommand{\Pr}{\mathbb{P}}
\newcommand{\convPr}{\xrightarrow{\,p\,}}
\newcommand{\given}{\mid}
\newcommand{\uniDist}{\mathrm{Uni}}
\newcommand{\poiDist}{\mathrm{Poi}}
\newcommand{\as}{\textrm{ a.s.}}
\newcommand{\abs}[1]{\left\lvert#1 \right\rvert}
\newcommand{\dist}{\ \sim\ }
\newcommand{\distiid}{\overset{\mathrm{iid}}{\dist}}
\newcommand{\distind}{\overset{ind}{\dist}}
\newcommand{\PP}{\Pi}
\newcommand{\Lebesgue}{\Lambda}
\global\long\def\iid{i.i.d.\ }
\global\long\def\bern{\mathrm{Bern}}
\global\long\def\bernDist{\bern}
\global\long\def\uniDist{\mathrm{Uni}}
\global\long\def\poiDist{\mathrm{Poi}}
\global\long\def\given{\mid}
\global\long\def\distiid{\overset{iid}{\dist}}
\global\long\def\distind{\overset{ind}{\dist}}
\global\long\def\Reals{\mathbb{R}}
\global\long\def\Nats{\mathbb{N}}
\global\long\def\NNReals{\Reals_{+}}
\global\long\def\as{\textrm{ a.s.}}
\global\long\def\grad{\nabla}
\providecommand\given{} %
\newcommand\SetSymbol[1][]{
  \nonscript\,#1:\nonscript\,\mathopen{}\allowbreak}
\DeclarePairedDelimiterX\Set[1]{\lbrace}{\rbrace}%
{ \renewcommand\given{\SetSymbol[]} #1 }
\newcommand{\samp}{\mathsf{Smpl}}
\DeclareMathOperator*{\argmin}{argmin}
\crefname{lemma}{Lemma}{Lemmas}
\crefname{cor}{Corollary}{Corollaries}
\crefname{theorem}{Theorem}{Theorems}
\crefname{assumption}{Assumption}{Assumptions}
\declaretheorem[style=plain,numberwithin=section,name=Theorem]{theorem}
\declaretheorem[style=plain,sibling=theorem,name=Lemma]{lemma}
\declaretheorem[style=definition,sibling=theorem,name=Definition]{defn}
\declaretheorem[style=remark,sibling=theorem,name=Remark]{remark}
\newcommand{\LATER}[1]{\error}
\newcommand{\fLATER}[1]{\error}
\newcommand{\TBD}[1]{\error}
\newcommand{\fTBD}[1]{}
\newcommand{\PROBLEM}[1]{\error}
\newcommand{\fPROBLEM}[1]{\error}
\newcommand{\NA}[1]{#1} 
\declaretheoremstyle[
    spacebelow=\parsep,
    spaceabove=\parsep,
  mdframed={
    backgroundcolor=gray!10!white,     %
    hidealllines=true, 
    innertopmargin=8pt, 
    innerbottommargin=4pt, 
    skipabove=8pt,
    skipbelow=10pt,
    nobreak=true
}
]{grayboxed} 
\declaretheorem[style=plain]{auxtheorem}
\declaretheorem[style=grayboxed,sibling=auxtheorem]{algorithm}
\declaretheorem[style=grayboxed,name=Algorithm]{nalgorithm}
\newcommand{\kword}[1]{\emph{#1}}
\newcommand{\tspace}{\mathcal{T}}
\newcommand{\argdot}{\,\vcenter{\hbox{\tiny$\bullet$}}\,}
\newcommand{\sample}{\mathbb{S}}
\def\bX{\overline{X}}
\def\bsample{\overline{\sample}}
\newcommand{\genparam}{\theta}
\newcommand{\globparam}{\gamma}
\newcommand{\embedding}{\lambda}
\runningtitle{Relational ERM}
\runningauthor{Veitch, Austern, Zhou, Blei, Orbanz}
\begin{document}
\newcommand*\samethanks[1][\value{footnote}]{\footnotemark[#1]}
\twocolumn[

\aistatstitle{Empirical Risk Minimization and \\Stochastic Gradient Descent for Relational Data}
\runningtitle{Relational ERM}

\aistatsauthor{Vicor Veitch\footnotemark \And Morgane Austern\samethanks \And Wenda Zhou\samethanks \And David M. Blei \And Peter Orbanz}
\aistatsaddress{Columbia University}
]
\footnotetext{Equal contribution}

\begin{abstract}
  Empirical risk minimization is the main tool for prediction problems, but
  its extension to relational data remains unsolved.
  We solve this problem using recent ideas from graph sampling theory to
  (i) define an empirical risk for relational data and
  (ii) obtain stochastic gradients for this empirical risk that are automatically unbiased.
  This is achieved by considering the method by which data is sampled from a graph
  as an explicit component of model design. 
  By integrating fast implementations of graph sampling schemes with standard automatic differentiation tools,
  we provide an efficient turnkey solver for the risk minimization problem.
  We establish basic theoretical properties of the procedure.
  Finally, we demonstrate relational ERM with application to two non-standard problems: 
  one-stage training for semi-supervised node classification,
  and learning embedding vectors for vertex attributes.
  Experiments confirm that the turnkey inference procedure is effective in practice, 
  and that the sampling scheme used for model specification has a strong effect on model performance.
  Code is available at \href{https://github.com/wooden-spoon/relational-ERM}{github.com/wooden-spoon/relational-ERM}.
   
\end{abstract}

\section{Introduction}

\def\samp{\mathsf{Sample}}

Relational data is data that can be represented as a graph,
possibly annotated with additional information.
An example is the link graph of a social network,
annotated by user profiles.
We consider prediction problems for such data.
For example, how to predict the preferences of a user of a social network
using both the preferences and profiles of other users,
and the network itself?
In the classical case of \iid sequence data---where the observed data does not include link structure---the data 
decomposes into individual examples.
Prediction methods for such data typically rely on this decomposition, 
e.g., predicting a user's preferences from only the profile of the user, ignoring the network structure.
Relational data, however, does not decompose; 
e.g., because of the link structure, a social network can not be decomposed into individual users.
Accordingly, classical methods do not generally apply to relational data,
and new methods cannot be developed with the same ease as for \iid sequence data.

With \iid sequence data, prediction problems are
typically solved with models fit by empirical risk minimization (ERM) \cite{Vapnik:1992,Vapnik:1995,Shalev-Shwartz:Ben-David:2014}.
We give an (unusual) presentation of ERM that anticipates the relational case.
The observed data is a set $\bsample_n = \{\bX_1,\ldots,\bX_n\}$ that decomposes into examples $\bX_i=(X_i,Y_i)$.
The task is to choose a predictor $\pi$ that completes $X$ by estimating missing information $Y$, e.g., a class label.
An ERM model is defined by two parts: (i) a hypothesis class ${\lbrace\pi_{\genparam}|\genparam\in\tspace\rbrace}$ from which $\pi$ is chosen, %
and (ii) a loss function $L$ where $L(\bar{x}; \genparam) \in \NNReals$ 
measures the reconstruction error of predictor $\pi_\genparam$ on example $\bar{x}$.   
\NA{The empirical risk is the expected loss on an example randomly selected from the dataset:}
\begin{equation}\label{iid:risk}
  \hat{R}(\theta, \bsample_n) \defeq \EE_{\bX\sim \mathbb{F}(\bsample_n)}[L(\bX; \genparam)|\bsample_n],
\end{equation}
where $\mathbb{F}(\bsample_n)$ is the empirical distribution.\footnote{The empirical risk is more often equivalently written as $\hat{R}(\theta, \bsample_n) = \frac{1}{n}\sum_{i\leq n}L(\bX_i;\genparam)$.}
The ERM dogma is to select the predictor $\pi_{\hat{\genparam}_n}$ given by $\hat{\genparam}_n = \argmin_\genparam \hat{R}(\genparam, \bsample_n)$.
That is, the objective function that defines learning is the empirical risk. 

ERM has two useful properties. (1) It provides a principled framework for defining new machine learning methods. 
In particular, when examples are generated i.i.d., model-agnostic results guarantee that ERM models cohere as more data is collected (e.g., in the sense of statistical convergence) \cite{Shalev-Shwartz:Ben-David:2014}. 
(2) For differentiable models, mini-batch stochastic gradient descent (SGD) can 
efficiently solve the minimization problem (albeit, approximately).
The ease of SGD comes from the definition of the empirical risk as the expectation over a randomly subsampled example: 
the gradient of the loss on a randomly subsampled example is an unbiased estimate of the gradient of the empirical risk. 
\NA{Combined with automatic differentiation, this provides a turnkey approach to fitting machine-learning models.}

Returning to relational data,
the observed data is now a graph $\overline{G}_n$ of size $n$ (e.g., the number of vertices or edges).
The graph is possibly annotated, e.g., by vertex labels. 
We further consider $G_n$ as an incomplete version of $\overline{G}_n$.
For example, $G_n$ may censor labels of the vertices or some of the edges from $\overline{G}_n$.
In relational learning, the task is to find a predictor $\pi$ that completes $G_n$ by estimating the missing information.
Typically, $\pi$ is chosen from a parameterized family ${\lbrace\pi_{\genparam}|\genparam\in\tspace\rbrace}$
to minimize an objective function $\mathcal{O}_n(\genparam, \overline{G}_n)$.
Unlike the empirical risk, the objective $\mathcal{O}_n$ is not built from a loss on individual examples; $\mathcal{O}_n$ must be specified for the entire observed graph.

In relational learning, there is not yet a framework 
that has properties (1) and (2) of ERM.
The challenge is that relational data does not decompose into individual examples.
Regarding (1), theory is elusive because the \iid sequence assumption is meaningless for relational data.
This makes it difficult to reason about what happens as more data is collected.
Regarding (2), mini-batch SGD is not generally applicable even for differentiable models. 
SGD requires unbiased estimates of the full gradient.
For a random subgraph $G_k$ of $G_n$, 
the stochastic gradient $\grad_\genparam \mathcal{O}_k(\pi_\genparam(G_k), \overline{G}_k)$ is not generally unbiased.
In particular, the bias depends on the choice of random sampling scheme used to select the subgraph.
Circumventing these two issues requires either careful design of the objective function used for learning \citep[e.g.,][]{Perozzi:Al-Rfou:Skiena:2014,Grover:Leskovec:2016,Chamberlain:Clough:Deisenroth:2017,Yang:Cohen:Salakhudinov:2016,Hamilton:Ying:Leskovec:2017:inductive}, or
model-specific derivation and analysis. 
For example, graph convolutional networks \cite{Kipf:Welling:2016,Kipf:Welling:2017,Schlichtkrull:Kipf:Bloem:vandenBerg:Titov:Welling:2018,vandenBerg:Kipf:Welling:2017} use full batch gradients, and scaling training requires custom derivation of stochastic gradients \cite{Chen:Zhu:Song:2018}.

This paper introduces relational ERM, a generalization of ERM to relational data.
Relational ERM provides a recipe for machine learning with relational data that preserves the two important properties of ERM:
\begin{enumerate}
\item It provides a simple way to define (task-specific) relational learning methods, and
\item For differentiable models, relational ERM minimization can be efficiently solved 
in a turnkey fashion by mini-batch stochastic gradient descent. 
\end{enumerate}
Relational ERM mitigates the need for model-specific analysis and fitting procedures.

\NA{
Extending turnkey mini-batch SGD to relational data allows the easy use of autodiff-based machine-learning frameworks for relational learning.
To facilitate this, we provide fast implementations of a number of graph subsampling algorithms, and integration with TensorFlow.}\footnote{\href{https://github.com/wooden-spoon/relational-ERM}{github.com/wooden-spoon/relational-ERM}}

In \cref{sec:rerm} we define relational ERM models and 
show how to automatically calculate unbiased mini-batch stochastic gradients.
In \cref{sec:example_models} we explain connections to previous work on machine learning for graph data
and we illustrate how to develop task-specific relational ERM models. 
In \cref{sec:samp-algorithms} we review several randomized algorithms for subsampling graphs. 
Relational ERM models require the specification of such algorithms.
In \cref{sec:theory} we establish theory for relational ERM models.
The main insights are: (i) the \iid assumption can be replaced by an assumption on how the data is collected \cite{Orbanz:2017,Veitch:Roy:2016,Borgs:Chayes:Cohn:Veitch:2017,Crane:Dempsey:2016:snm}, and, (ii) the choice of randomized sampling algorithm is necessarily viewed as a model component.
In \cref{sec:Experiments}, we study relational ERM empirically by implementing the models of \cref{sec:example_models}.
We observe that the turnkey mini-batch SGD procedure succeeds in efficiently fitting the models, and that the choice of graph subsampling algorithm has a large effect in practice.  

\section{Relational ERM and SGD}\label{sec:rerm}

Our aim is to define relational ERM in analogy with classical ERM. 
The fundamental challenge is that relational data does not decompose into individual examples.
Classical ERM uses the empirical distribution to define the objective function \cref{iid:risk}.
There is no canonical analogue of the empirical distribution for relational data.

The first insight is that the empirical distribution may be viewed as a randomized algorithm for subsampling the dataset.
The required analogue is then a randomized algorithm for subsampling a graph. 
In the \iid setting, uniform subsampling is almost always used. 
However, there are many possible ways to sample from a graph.
We review a number of possibilities in \cref{sec:samp-algorithms}.
For example, the sampling algorithm might draw a subgraph induced by sampling $k$ vertices at random, 
or the subgraph induced by a random walk of length $k$.
The challenge is that there is no a priori criterion for deciding which sampling algorithm is ``best.''

Our approach is to give up and declare victory:
we \emph{define} the required analogue as a \emph{component of model design}.  
We require the analyst to choose a randomized sampling algorithm $\samp$, where $\samp(\overline{G}_n,k)$ is a random subgraph of size $k$. 
The choice of $\samp$ defines a notion of ``example.'' This allows us to complete the analogy to classical ERM.

A \kword{relational ERM model} is defined by three ingredients:
\begin{enumerate}
\item A sampling routine $\samp$.
\item A predictor class ${\lbrace \pi_\genparam |\genparam\in\tspace\rbrace}$ with parameter $\genparam$. %
\item A loss function $L$, where $L(\overline{G}_k; \genparam)$ measures the reconstruction quality of $\pi_\genparam$ on example $G_k$.
\end{enumerate}
The objective function is defined in analogy with the empirical risk \cref{iid:risk}.
The \kword{relational empirical risk} is:
\begin{equation}
  \label{relational:risk}
  \hat{R}_k(\pi, \overline{G}_n) := \EE_{\overline{G}_k=\samp(\overline{G}_n, k)}[L(\overline{G}_k;\genparam) \given \overline{G}_n].
\end{equation}
\kword{Relational empirical risk minimization} selects a predictor $\hat{\pi}$ that minimizes the relational empirical risk,
\begin{equation}\label{eqn:rerm}
  \hat{\pi}:=\pi_{\hat{\genparam}_n}
  \quad\text{ where }\quad
  \hat{\genparam}_n \; := \; \argmin_{\genparam} \hat{R}_k(\pi_\genparam, \overline{G}_n) \;.
\end{equation} 

\subsection*{Stochastic gradient descent}
\label{sec:sgd}
A crucial property of relational ERM is that SGD can be applied to 
solve the minimization problem \cref{eqn:rerm} without any model specific analysis.
Define a stochastic gradient as $\grad_{\genparam}L(\samp(G_{n},k); \genparam)$, 
the gradient of the loss computed on a sample of size $k$ drawn with $\samp$.
Observe that
\begin{align*}%
\grad_{\theta}\hat{R}_{r}(\theta,G_n)
&\;=\;
\grad_{\theta}\EE[L(\samp(G_n,k); \theta)\given \overline{G}_n] \\
&\;=\;
\EE[\grad_{\theta}L(\samp(G_n,k); \theta)\given \overline{G}_n].
\end{align*}
That is, the random gradient $\grad_{\genparam}L(\samp(G_{n},k); \genparam)$ is an unbiased estimator of the gradient of the full relational empirical risk. 
If $\samp$ is computationally efficient, then SGD with this stochastic estimator can solve the relational ERM.

To specify a relational ERM model in practice, the practitioner implements the three ingredients in code.
Machine-learning frameworks provide tools to make it easy to specify a class of predictors and a per-example loss function,
which are ingredients of classical ERM.
Relational ERM additionally requires implementing $\samp$ and integrating it with a machine-learning framework. 
In practice, $\samp$ can be chosen from a standard library of sampling routines. 
To that end, we provide efficient implementations of a number of routines and integration with an automatic differentiation framework (TensorFlow).\footnote{\href{https://github.com/wooden-spoon/relational-ERM}{github.com/wooden-spoon/relational-ERM}}
This gives an effective ``plug-and-play'' approach for defining and fitting models.

\section{Example Models}
\label{sec:example_models}

We consider several examples of relational ERM models.
We split the
parameter into a pair $\genparam=(\globparam,\embedding)$:
the global parameters $\globparam$ are shared across the entire graph, 
and the embedding parameters $\embedding$ provide low-dimensional embeddings $\embedding_v$ for each vertex $v$.
Informally, global parameters encode population 
properties---``people with different political affiliation are less likely to be friends''---and
the embeddings encode per-vertex information---``Bob is a radical vegan.''

\subsubsection*{Graph representation learning}
Methods for learning embeddings of vertices are widely studied; see \cite{Hamilton:Ying:Leskovec:2017:review} for a review. 
Many such methods rely on decomposing the graph into neighborhoods determined by (random) walks of fixed size.
One example is Node2Vec \cite{Grover:Leskovec:2016} (an extension of DeepWalk \cite{Perozzi:Al-Rfou:Skiena:2014}).
The basic approach is to draw a large collection of simple random walks,
view each of these walks as a ``sentence'' where each vertex is a ``word'',
and learn vertex embeddings by applying a standard word embedding method \cite{Mikolov:Chen:Corrado:Dean:2013,Mikolov:Sutskever:Chen:Corrado:2013}. 
To use mini-batch SGD, the objective function is restricted to a uniform sum over all walks.
Unbiased stochastic gradients to be computed by uniformly sampling walks.

Relational ERM models include graph representation models of this kind. 
For example, Node2Vec \cite{Grover:Leskovec:2016} is equivalent to a relational ERM model that 
(i) predicts graph structure using a predictor parameterized only by embedding vectors, 
(ii) uses a cross-entropy loss on graph structure,
and (iii) takes $\samp$ as a random-walk of fixed length (augmented with randomly sampled negative examples).

A number of other relational learning methods also 
enable SGD by restricting the objective function to a uniform sum over fixed-size subgraphs \citep[e.g.,][]{Grover:Leskovec:2016,Chamberlain:Clough:Deisenroth:2017,Yang:Cohen:Salakhudinov:2016,Hamilton:Ying:Leskovec:2017:inductive}.
Any such model is equivalent to a relational ERM model that takes $\samp$ as the uniform distribution over fixed-size subgraphs.
But, in general, relational ERM does not require restricting to sampling schemes of this kind.
Note that ``negative-sampling'' algorithms---which are critical in practice---do not uniformly sample fixed size subgraphs. 

The next examples illustrate relational ERM for problems that 
are difficult 
with existing approaches to graph representation learning.

\subsubsection*{Semi-supervised node classification}
Consider a network $G_n$ where each node $i$ is labeled by binary features---for example, 
hyperlinked documents labeled by subjects, or interacting proteins labeled by function. 
The task is to predict the labels of a subset of these nodes using the graph structure and the labels of the remaining nodes. 

The model has the following form:
Each vertex $i$ is assigned a $k$-dimensional embedding vector $\embedding_{i} \in \Reals^{k}$.
Labels are predicted using a parameterized
function ${f(\argdot;\globparam)\ :\ \Reals^{k}\to [0,1]^L}$ that maps
the node embeddings to the probability of each label.
The presence or absence of edge $i,j$ is predicted based on $\embedding_{i}^T\embedding_{j}$.
This enables learning embeddings for unlabeled vertices.
Let $\sigma$ denote the sigmoid function; let label $l_{ij}\in \{0,1\}$ denote whether vertex $i$ has label $j$;
and let $q \in [0,1]$. The loss on subgraphs $G_k \subset G_n$ is:
\begin{align} 
  &L(G_{k}; \embedding,\globparam,l) = \label{eq:model:nodes} \\ 
  &q \Bigl(\sum_{i\in\vertices(G_{k})}\sum_{j=1}^L l_{ij}\log f(\embedding_{i};\globparam)_j 
  + (1 \! - \! l_{ij})\log(1 \! - \! f(\embedding_{i};\globparam)_j)\Bigr) \nonumber \\ 
 &+ \! (1 \! - \! q)\Bigl(- \!\!\! \sum_{i,j\in\edges(G_{k})} \!\!\! \log\sigma(\embedding_{j}^{T}\embedding_{i})
 - \!\!\! \sum_{i,j\in\bar{\edges}(G_{k})} \!\!\! \log(1 \! - \! \sigma(\embedding_{j}^{T}\embedding_{i}))\Bigr). \nonumber
\end{align}
Here, $\vertices$, $\edges$, and $\bar{\edges}$ denote the vertices,
edges, and non-edges of the graph respectively. 
The loss on edge terms is cross-entropy, a standard choice in embedding models \cite{Hamilton:Ying:Leskovec:2017:review}. 
Intuitively, the predictor uses the embeddings to predict both the vertex labels and the subgraph structure.

The model is completed by choosing a sampling scheme $\samp$.
Relational ERM then fits the parameters as
\[
(\hat{\embedding}_n,\hat{\globparam}_n)
\;=\;
\argmin_{\embedding,\globparam}\EE[L \big(\embedding,\globparam ;\ \samp(G_n,k),l \big)\given G_n].
\]
We can vary the choice of $\samp$ independent of optimization concerns; 
in \cref{sec:Experiments} we observe that this leads to improved predictive performance. 

Older embedding approaches use a two-stage procedure: node embeddings are first pre-trained using the graph structure, and 
then used as inputs to a logistic regression that predicts the labels \citep[e.g.,][]{Perozzi:Al-Rfou:Skiena:2014,Grover:Leskovec:2016}. 
\citet{Yang:Cohen:Salakhudinov:2016} adapt a random-walk based method to allow simultaneous training;
their approach requires extensive development, including a custom (two-stage) variant of SGD.
Relational ERM allows simultaneous learning with no need for model specific derivation.

\subsubsection*{Wikipedia category embeddings}
\begin{figure*}[t]
  \begin{center}
    \includegraphics[width=0.65\textwidth]{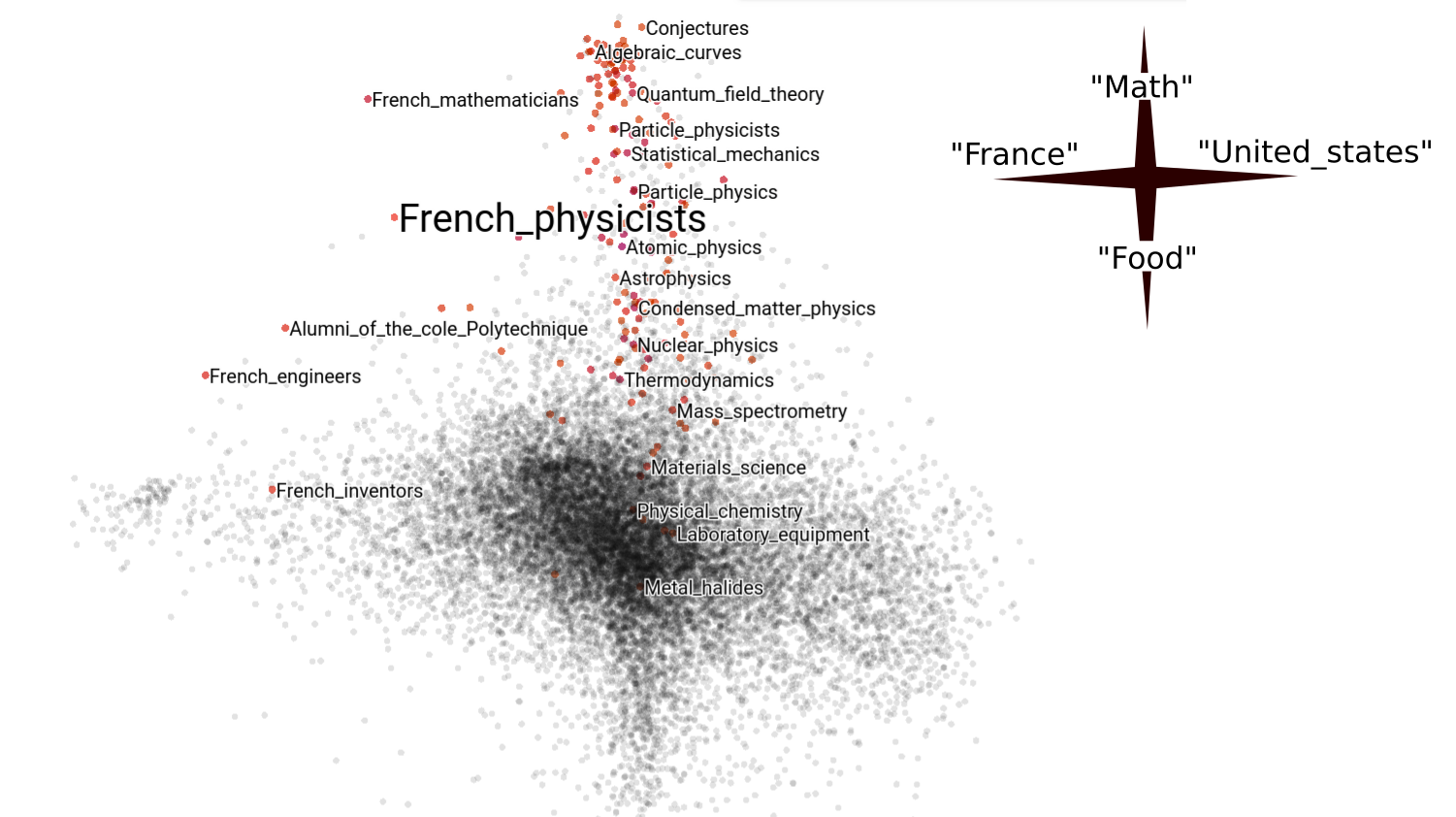};
  \end{center}
    \caption{Trained Wikipedia category embeddings. Category embeddings are projected into a 2-dimensional space,
      with a projection chosen to maximally separate ``\texttt{France}'' and ``\texttt{United\_states}'' horizontally, and ``\texttt{Math}'' and ``\texttt{Food}'' vertically.
      Highlighted categories are nearest neighbors of ``\texttt{French\_physicists}.'' 
    \label{fig:wiki_embeddings}}
\end{figure*}
We consider Wikipedia articles joined by hyperlinks. 
Each article is tagged as a member of one or more categories---for example,
``\texttt{Muscles\_of\_the\_head\_and\_neck}'', ``\texttt{Japanese\_rock\_music\_groups}'', 
or ``\texttt{People\_from\_Worcester}.'' 
The task is to learn embeddings that encode semantic relationships between the categories.

Let $G_n$ denote the hyperlink graph and let $\mathcal{C}(i)$ denote the categories of article $i$.
Each category $c \in C$ is assigned an embedding $\gamma_c$,
and the embedding of each article (vertex) is taken to be the sum of the embeddings of its categories, $\embedding_i \defeq \sum_{c\in\mathcal{C}(i)}\gamma_c$.
The loss is
\begin{align}
&L(G_{k},C; \embedding) =   \label{eq:wikipedia}\\ 
& \qquad -\!\!\!\sum_{i,j\in\edges(G_{k})}\log\sigma(\embedding_{j}^{T}\embedding_{i})
\;-\!\!\!\sum_{i,j\in\bar{\edges}(G_{k})}\log(1-\sigma(\embedding_{j}^{T}\embedding_{i})) \;, \nonumber
\end{align}
where $\edges$ and $\bar{\edges}$ denote, respectively, the presence and absence of hyperlinks between articles.
Intuitively, the predictor uses the category embeddings to predict the hyperlink structure of subgraphs.
Relational ERM chooses the embeddings as
\[
\hat{\gamma}_n=\argmin_{\gamma}\EE[L \big(\embedding(\gamma);\ \samp(G_n,k),C \big) \given G_n]\;.
\]
We write $\embedding(\gamma)$ to emphasize that the article embeddings are a function of the category embeddings.
Category embeddings obtained with this model are illustrated in \cref{fig:wiki_embeddings}; see 
\cref{sec:Experiments} for details on the experiment.

The point of this example is: relational ERM makes it easy to implement 
this non-standard relational learning model and fit it with mini-batch SGD.
The use of mini-batch SGD is important because the data graph is large.

\subsubsection*{Statistical relational learning}
Statistical relational learning takes the graph to encode the dependency structure between the units \cite[][e.g.]{Neville:Jensen:2007,Getoor:Taskar:2007}.
The idea is to infer a joint probability distribution over the entire dataset, respecting the dependency structure.
The distribution can then be used to make graph-aware predictions.
There is also work on adapting SGD to this setting \cite{Yang:Ribeiro:Neville:2017}.
Despite the similar goals, Relational ERM does not attempt to infer a distribution; the precise relationship with statistical relational learning is not clear.

\section{Subsampling algorithms}
\label{sec:samp-algorithms}

In classical ERM, sampling uniformly (with or without replacement) is typically the only choice. 
In contrast, there are many ways to sample from a graph. 
Each such sampling algorithm $\samp$ leads
to a different notion of empirical risk in \eqref{relational:risk}.

As described above, random walks underlie graph representation methods built in analogy with language models. 
A simple random walk of length $k$ on a graph $\overline{G}_n$ selects
vertices ${v_1,\ldots,v_k}$ by starting at a given vertex $v_1$, and
drawing each vertex ${v_{i+1}}$ uniformly from the neighbors of $v_i$.
Typically, random-walk based methods augment the sample by hallucinating additional edges using a strategy borrowed from the Skipgram model \cite{Mikolov:Chen:Corrado:Dean:2013}:
\begin{algorithm}[Random walk: Skipgram \cite{Perozzi:Al-Rfou:Skiena:2014}]
  \label{alg:rwsg}
\leavevmode
  \begin{enumerate}[label=(\roman*),topsep=0pt]
    \item Sample a random walk ${v_1,\ldots,v_k}$ starting at a uniformly selected vertex of $\overline{G}_n$.
    \item Report $\overline{G}_k = \{(v_i,v_j) : d(v_i,v_j) < W \}$. The \emph{window} $W$ is a sampler parameter, and $d(v_i,v_j)$ is the number of steps between $v_i$ and $v_j$.
  \end{enumerate}
\end{algorithm}
Since relational ERM is indifferent to the connection with language models, a natural alternative augmentation strategy is:
\begin{algorithm}[Random walk: Induced]
  \label{alg:rwinduced}
  \leavevmode
  \begin{enumerate}[label=(\roman*),topsep=0pt]
  \item Sample a random walk ${v_1,\ldots,v_k}$ starting at a uniformly selected vertex of $\overline{G}_n$.
   \item Report $\overline{G}_k$ as the edge list of the vertex induced subgraph of the walk.
    \end{enumerate}
\end{algorithm}

A simple choice is to sample $k$ vertices uniformly at random and report $\overline{G}_k$ as the induced subgraph.
Such an algorithm will not work well in practice since it is not suitable for sparse graphs.
We are typically interested in the case $k \ll n$.
If $\overline{G}_n$ is sparse then such a sample typically includes few or no edges,
and thus carries little information about $\overline{G}_n$. 
The next algorithm modifies uniform vertex sampling to fix this pathology.
The idea is to over-sample vertices and 
retain only those vertices that participate in at least one edge in the induced subgraph.
\begin{algorithm}[$p$-sampling \cite{Veitch:Roy:2016}] 
  \label{alg:p}
  \leavevmode
  \begin{enumerate}[label=(\roman*),topsep=0pt]
    \item Select each vertex in $\overline{G}_n$ independently,
      with a fixed probability ${p\in[0,1]}$.
    \item Extract the induced subgraph ${\overline{G}_k}$ of 
      $\overline{G}_n$ on the selected vertices.
    \item Delete all isolated vertices
      from $\overline{G}_k$, and report the resulting graph.
   \end{enumerate}
\end{algorithm}

Another natural sampling scheme is:
\begin{algorithm}[Uniform edge sampling]
  \label{alg:u:edges}
   \leavevmode
  \begin{enumerate}[label=(\roman*),topsep=0pt]
  \item Select $k$ edges in $\overline{G}_n$ uniformly and independently from the edge set.
   \item Report the graph $\overline{G}_k$ consisting of these edges, and all vertices incident to these edges.
  \end{enumerate}
\end{algorithm}
Many other sampling schemes are possible; see \citet{Leskovec:Faloutsos:2006} for a discussion of possible options in a related context.

\subsection{Negative sampling}
For a pair of vertices in an input graph $\overline{G}_n$, a sampling algorithm can report three types of edge information:
The edge may be observed as present, observed as absent (a \emph{non-edge}), or may not be observed.
The algorithms above do not treat edge and non-edge information equally:
\cref{alg:u:edges,alg:rwinduced,alg:rwsg} cannot report non-edges, 
and the deletion step in \cref{alg:p} biases it towards edges over non-edges.
However, the locations of non-edges can carry significant information.

Negative sampling schemes are ``add-on'' algorithms that are applied to the output of a graph sampling algorithm
and augment it by non-edge information. Let $\overline{G}_k$ denote a sample generated by one of the algorithms above
from an input graph $\overline{G}_n$. 
\begin{nalgorithm}[Negative sampling: Induced]
  \label{alg:induced}
  \leavevmode
  \begin{enumerate}[label=(\roman*),topsep=0pt]
\item Report the subgraph induced by $\overline{G}_k$, in the input graph $\overline{G}_n$
      from which $\overline{G}_k$ was drawn.
  \end{enumerate}
\end{nalgorithm}
Another method, originating in language modeling \cite{Mikolov:Sutskever:Chen:Corrado:2013,Goldberg:Levy:2014},
is based on the unigram distribution:
Define a probability distribution
on the vertex set of $\overline{G}_k$ by ${P_{n}(v):=\text{Prob}\lbrace v\in\overline{H}_k\rbrace}$, the probability
that $v$ would occur in a separate, independent sample $\overline{H}_k$ generated from $\overline{G}_n$ by the same algorithm
as $\overline{G}_k$.
For $\tau>0$, we define
a distribution ${P_n^{\tau}(v):=(P_{n}(v))^{\tau}/Z(\tau)}$, where $Z(\tau)$ is the appropriate normalization.
\begin{nalgorithm}[Negative sampling: Unigram]
  \label{alg:Unigram}
  For each vertex $v$ in $\overline{G}_n$:
  \leavevmode
  \begin{enumerate}[label=(\roman*),topsep=0pt]
  \item Select $k$ vertices ${v_1,\ldots,v_k \distiid P_n^{\tau}}$.
    \item If $(v,v_j)$ is a non-edge in $\overline{G}_n$, add it to $\overline{G}_n$.
  \end{enumerate}
\end{nalgorithm}
The canonical choice in the embeddings literature is ${\tau=\frac{3}{4}}$ \cite{Mikolov:Sutskever:Chen:Corrado:2013}.

\section{Theory}
\label{sec:theory} 

\newcommand{\rERM}{\hat{R}}
\newcommand{\smptheta}{\hat{\embedding}^{\samp}}
\newcommand{\smplambda}{\hat{\globparam}^{\samp}}
\newcommand{\smplambdalim}{\globparam^{\samp}}

\newcommand{\pERM}{\hat{R}^{\mathrm{ps}}}
\newcommand{\rwERM}{\hat{R}^{\mathrm{rw}}}
\newcommand{\ptheta}{\hat{\embedding}^{\mathrm{ps}}}
\newcommand{\rwtheta}{\hat{\embedding}^{\mathrm{rw}}}
\newcommand{\plambda}{\hat{\globparam}^{\mathrm{ps}}}
\newcommand{\rwlambda}{\hat{\globparam}^{\mathrm{rw}}}
\newcommand{\plambdalim}{\globparam^{\mathrm{ps}}_{*}}
\newcommand{\rwlambdalim}{\globparam^{\mathrm{rw}}_{*}}

\newcommand{\dataset}{\overline{G}}
\newcommand{\gpopulation}{\mathcal{G}}

\newcommand{\restrict}{|}

We now turn to formalizing and establishing theoretical properties of relational ERM.
Particularly, (i) relational ERM satisfies basic theoretical desiderata, and (ii) $\samp$ should be viewed as a model component.
We first give the results, and then discuss their interpretation and significance.

When the data is unstructured (i.e., no link structure), theoretical analysis of ERM relies on the assumption that the data is generated i.i.d.
The \iid assumption is ill-defined for relational data. 
Any analysis %
requires some analogous assumption for how the data $\overline{G}_n$ is generated. 
Following recent work %
emphasizing the role of sampling theory in modeling graph data \cite{Orbanz:2017,Veitch:Roy:2016,Borgs:Chayes:Cohn:Veitch:2017,Crane:Dempsey:2016:snm},
we model $\overline{G}_n$ as a random sample drawn from some large population network.
Specifically, we consider a population graph $\mathcal{G}$ with $|\mathcal{G}|$ edges,
and assume that the observed sample $\overline{G}_n$ of size $n$ is generated by $p$-sampling
from $\mathcal{G}$, with ${p=n/\sqrt{|\mathcal{G}|}}$. 
We assume the population graph is ``very large,'' in the sense that ${|\mathcal{G}|\rightarrow\infty}$.
The distribution of $\overline{G}_n$ in the ``infinite population'' case is well-defined \cite{Borgs:Chayes:Cohn:Veitch:2017}.

The analogy with \iid data generation is two-fold: 
Foundationally, the \iid assumption is equivalent to assuming the data 
is collected by uniform sampling from some population \cite{Politis:Romano:Wolf:1999},
and $p$-sampling is a direct analogue \cite{Veitch:Roy:2016,Borgs:Chayes:Cohn:Veitch:2017,Orbanz:2017}. 
Pragmatically, both assumptions strike a balance between 
being flexible enough to capture real-world data \cite{Caron:Fox:2017,Veitch:Roy:2015}
and simple enough to allow precise theoretical statements.

We establish results for several choices of $\samp(G_n,k)$.
Edges may be selected by either $p$-sampling with ${p=k/\sqrt{n}}$---note the size of $\samp(G_n,k)$ is free of $n$---or 
by using a simple random walk of length $k$ (\cref{alg:rwsg} or \cref{alg:rwinduced}).
Negative examples may be chosen by \cref{alg:induced} or \cref{alg:Unigram}.
The main result guarantees that the limiting risk of the parameter we learn depends
only on the population and the model, and not on idiosyncrasies of the training data.
\begin{theorem}
\label{thm:emp_risk_conv}
Suppose that $G_n$ is collected by $p$-sampling as described above,
that $k \in \Nats$ is fixed, and
that $\samp$ is fixed to a sampling algorithm based on either $p$-sampling or random walk sampling as described above.
Suppose further that the loss is bounded and parameter setting $\bar{\genparam} = (\bar{\globparam},\bar{\embedding})$ satisfies mild technical conditions given in the appendix.
Then there is some constant $c_{\bar{\genparam}}(\samp, k) \in \NNReals$ such that
\begin{equation}
\label{eq:thm:1}
\rERM_{k}(\bar{\genparam}; \dataset_n) \to c_{\bar{\genparam}}(\samp, k)
\end{equation}
both in probability and in $L_{1}$ as $n\to\infty$.
Moreover, there is some constant $c_{*}(\samp, k) \in\NNReals$ such that 
\begin{equation}
\label{eq:thm:2}
\min_{\genparam}\rERM_{k}(\genparam; \dataset_n) \to c_{*}(\samp, k)
\end{equation}
both in probability and in $L_{1}$, as $n\to\infty$.

The limits depend on the choice of $\samp$ (and $k$), and usually do not agree between different sampling schemes.
\end{theorem}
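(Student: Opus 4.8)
The plan is to reduce the relational empirical risk to a normalized sum of a bounded, \emph{local} statistic of the sampled graph, and then to invoke the convergence of $p$-sampled graphs established in the sampling literature \cite{Veitch:Roy:2016,Borgs:Chayes:Cohn:Veitch:2017}. For each admissible $\samp$ the conditional expectation defining $\rERM_k(\bar{\genparam};\dataset_n)$ depends on $\dataset_n$ only through a functional of bounded radius: both a length-$k$ random walk and a $p$-sample with $p=k/\sqrt{n}$ touch only an $O(k)$-neighborhood. We may therefore write $\rERM_k(\bar{\genparam};\dataset_n)=\frac{1}{|\vertices(\dataset_n)|}\sum_{v}\phi_{\bar{\genparam}}(v,\dataset_n)$ in the walk case, and a corresponding weighted average over edges and vertices in the $p$-sampling case, where $\phi_{\bar{\genparam}}$ is bounded by the bound on $L$ and reads off only the labeled $k$-neighborhood of its argument. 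The technical conditions on $\bar{\genparam}$ are exactly what make $\phi_{\bar{\genparam}}$ a well-defined bounded function of this local data, the embeddings $\bar{\embedding}_v$ being a fixed function of the vertex labels supplied by the graphex representation of $\gpopulation$.

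First I would establish \eqref{eq:thm:1}. With the risk written as the average of $\phi_{\bar{\genparam}}$ above, it suffices to prove a law of large numbers for this average under $p$-sampling from $\gpopulation$. The expectation converges because $p$-sampling is consistent: locally, a $p$-sample of $\gpopulation$ is a size-biased neighborhood whose law stabilizes as $|\gpopulation|\to\infty$ \cite{Borgs:Chayes:Cohn:Veitch:2017}, so $\EE[\rERM_k(\bar{\genparam};\dataset_n)]\to c_{\bar{\genparam}}(\samp,k)$. For concentration I would bound the variance: two summands $\phi_{\bar{\genparam}}(v,\dataset_n)$ and $\phi_{\bar{\genparam}}(w,\dataset_n)$ are independent unless their $k$-neighborhoods overlap, and under $p$-sampling from the sparse population the expected number of overlapping pairs is of smaller order than $|\vertices(\dataset_n)|^{2}$, so $\operatorname{Var}\rERM_k\to 0$. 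Convergence in probability follows, and boundedness of $L$ upgrades it to $L_1$ by uniform integrability.

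Next, for \eqref{eq:thm:2} I would treat $\min_{\genparam}\rERM_k(\genparam;\dataset_n)$ as a sampling-continuous functional in its own right. The bound $\limsup_n \min_{\genparam}\rERM_k(\genparam;\dataset_n)\le c_*(\samp,k)$ follows by evaluating at a near-optimal \emph{population} parameter---an embedding function of the vertex labels---and applying \eqref{eq:thm:1}. The matching lower bound is the crux, and I expect it to be the main obstacle: because the parameter carries one embedding per vertex and the vertex set grows with $n$, the minimum cannot simply be passed through the limit. My plan is to exploit the product structure of the loss over nearly disjoint local samples---conditionally on the realized neighborhoods, embeddings appearing in different local pieces may be optimized almost independently---so that the empirical minimum is itself an average of local best-achievable losses plus a coupling term. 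Showing that the coupling term vanishes, and that the per-piece optimal loss is a bounded sampling-continuous statistic, reduces \eqref{eq:thm:2} to the same law of large numbers as \eqref{eq:thm:1}, now applied to the local minimal-loss functional. The appendix conditions (compactness of the range of $\globparam$ and regularity of admissible embedding functions) supply the compactness needed for this uniform control.

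Finally, the dependence on $\samp$ asserted in the last sentence is read off from the explicit limit: $c_{\bar{\genparam}}(\samp,k)$ is the expectation of $L(\cdot;\bar{\genparam})$ under the limiting law of the sample produced by $\samp$ from $\gpopulation$, and this law differs between schemes---length-$k$ walks size-bias toward high-degree vertices and connected motifs, whereas $p$-sampling does not. To make ``usually do not agree'' precise I would exhibit one explicit population, e.g.\ a two-block stochastic blockmodel graphon, together with a bounded loss for which the walk-based and $p$-sampling-based limits can be computed in closed form and shown to differ; since the two limiting sample laws place different mass on edges versus non-edges, any loss that is not constant across these motifs separates the two constants, giving $c_{\bar{\genparam}}(\samp,k)\neq c_{\bar{\genparam}}(\samp',k)$.
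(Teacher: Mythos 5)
There are two genuine gaps, and each one sits exactly where the paper's actual argument does its real work. The first is that your localization premise---that every admissible $\samp$ ``touches only an $O(k)$-neighborhood''---is false for most of the schemes the theorem covers. A $p$-sample with $p=k/\sqrt{n}$ selects vertices independently across the whole of $\dataset_n$ and reports the induced subgraph (after deleting isolated vertices), so its edges and non-edges span the entire graph; likewise, unigram negative sampling augments the walk with vertices drawn from a global distribution on the vertex set. In neither case can $\rERM_k(\bar{\genparam};\dataset_n)$ be written as an average of bounded, bounded-radius functionals $\phi_{\bar{\genparam}}(v,\dataset_n)$, so the law-of-large-numbers-plus-variance scheme never gets started (it is plausible only for the walk with induced negative examples; and even there, disjointness of realized $k$-neighborhoods does not give independence under the graphex model, because the neighborhoods of distinct vertices are coupled through the latent features of shared potential neighbors, so the covariance bound itself needs a quantitative argument). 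The paper's proof of \eqref{eq:thm:1} for $p$-sampling uses none of this: it exploits the exact identity that $k/n$-sampling from $\dataset_n$ is the same as uniformly relabeling its vertices and keeping labels below $k$, so that $\rERM_k(\bar{\genparam};\dataset_n)=\EE[L(G_k;\bar{\genparam})\mid\mathcal{F}_n]$, where $\mathcal{F}_n$ is the $\sigma$-field generated by forgetting all labels in $[0,n)$. The risk is therefore a reverse martingale, and reverse-martingale convergence plus triviality of $\mathcal{F}_\infty$ gives both modes of convergence at once. For random-walk sampling the paper instead proves, via a Stein exchangeable-pairs concentration lemma, that the data-dependent quantities $d_n(v)$ and $E_n$ may be replaced uniformly over parameters by $nW(x_v,\cdot)$ and $n^2\mathcal{E}$, after which the risk is the average of an exchangeable array and classical convergence results for such arrays apply; this is where the sparsity and heavy-tailed degrees that your variance sketch glosses over are actually controlled.

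The second gap is the lower bound in \eqref{eq:thm:2}, which you rightly call the crux but whose proposed resolution fails. Embeddings are shared across samples: the embedding of $v$ enters every walk passing within distance $k$ of $v$, and a positive fraction of all $p$-samples. This coupling is precisely what embedding learning is, and it does not vanish as $n\to\infty$: the average of per-piece minima is in general strictly smaller than the minimum of the average, and nothing forces the gap to close. The paper sidesteps the minimizer entirely: since a conditional expectation of a minimum is at most the minimum of the conditional expectations, the sequence $\bigl(\min_{\genparam}\rERM_k(\genparam;\dataset_n)\bigr)_n$ is a bounded reverse supermartingale with respect to the label-forgetting filtration (for walks, with respect to the exchangeable $\sigma$-fields of the array), hence converges a.s.\ and in $L_1$, and the limit is constant by triviality of the tail $\sigma$-field. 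Your upper bound via a near-optimal population parameter is fine but becomes unnecessary on that route, and your closing construction distinguishing the limits of different samplers is reasonable (the paper only asserts that final remark without proof); but without repairing the $p$-sampling case of \eqref{eq:thm:1} and the lower bound in \eqref{eq:thm:2}, the proposal does not prove the theorem.
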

The result is proved for $\samp$ based on $p$-sampling in \cref{sec:psamp} and for random-walk based sampling in \cref{sec:randomwalk}.

Classical ERM guarantees usually apply even to the parameter itself, not just its risk.
In the relational setting, the possibly complicated interplay of the learned embeddings makes such
results more difficult. The next two results build on \cref{thm:emp_risk_conv} 
to establish (partial) guarantees for the parameter itself.

We establish a convergence result for the global parameters output by a two-stage
procedure where the embedding vectors are learned first. 
Such a result is applicable, for example, when predicting vertex attributes from embedding vectors
that are pre-trained to explain graph structure.
The proof is given in \cref{sec:glob_param_conv}.
\begin{theorem}\label{thm:glob_conv}
Suppose the conditions of \cref{thm:emp_risk_conv}, and 
also that the loss function verifies a certain strong convexity property in $\globparam$, given
explicitly in the appendix.
Let $\tilde{\globparam}_n = \argmin_{\globparam}\min_{\embedding}\rERM_{k}(\globparam,\embedding;\dataset_n)$.
Then $\tilde{\globparam}_n \to \tilde{\globparam}_{*}(\samp,k)$ in probability 
for some constant $\tilde{\globparam}_{*}(\samp,k)$.
\end{theorem}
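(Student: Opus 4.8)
The plan is to \emph{profile out} the embedding parameters and reduce the claim to an argmin-continuity statement driven by the assumed strong convexity in $\globparam$. Define the profiled risk
\[
g_n(\globparam) \;\defeq\; \min_{\embedding}\rERM_{k}(\globparam,\embedding;\dataset_n),
\]
so that $\tilde{\globparam}_n = \argmin_{\globparam} g_n(\globparam)$. Two ingredients will suffice: (A) for each fixed $\globparam$, the scalar $g_n(\globparam)$ converges in probability (and in $L_1$) to a deterministic limit $g_*(\globparam)$; and (B) the appendix strong-convexity hypothesis makes each $g_n$, and hence its pointwise limit $g_*$, $m$-strongly convex in $\globparam$ for a common modulus $m>0$. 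Strong convexity guarantees that $g_n$ is coercive, so each $\tilde{\globparam}_n$ exists and is unique, and it lets me set $\tilde{\globparam}_{*}(\samp,k) \defeq \argmin_{\globparam} g_*(\globparam)$, the unique minimizer of the limit.

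The content of (A) is a partial-minimization version of \cref{thm:emp_risk_conv}. Equation \eqref{eq:thm:2} already controls $\min_{\genparam}\rERM_{k}=\min_{\globparam} g_n$, i.e.\ the minimum over \emph{all} parameters; what is needed here is the same convergence with $\globparam$ held fixed and the minimization restricted to the embeddings. I would obtain this by rerunning the proof of \cref{thm:emp_risk_conv} with $\globparam$ frozen, which yields a deterministic $g_*(\globparam)$ for each $\globparam$. This is the step where I expect the real difficulty: the dimension of $\embedding$ grows with $n$ (one vector per vertex), so the interchange of $\min_{\embedding}$ with the limit $n\to\infty$ is not automatic and must be justified exactly as in the main theorem, uniformly enough in $\globparam$ to give the restricted statement. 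I expect the ``mild technical conditions'' invoked for \cref{thm:emp_risk_conv} to be precisely what licenses this.

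To pass from pointwise convergence to convergence of minimizers I use convexity. Since $g_n$ is $m$-strongly convex with minimizer $\tilde{\globparam}_n$, the defining strong-convexity inequality at the point $\tilde{\globparam}_{*}$ gives
\[
\tfrac{m}{2}\,\lVert \tilde{\globparam}_{*} - \tilde{\globparam}_n \rVert^{2} \;\le\; g_n(\tilde{\globparam}_{*}) - \min_{\globparam} g_n .
\]
By (A) applied at the single point $\tilde{\globparam}_{*}$, the first term on the right converges in probability to $g_*(\tilde{\globparam}_{*})$; and because the $g_n$ are convex, their pointwise convergence promotes to uniform convergence on compact sets (handled in probability by a routine subsequence argument over a countable dense set together with the local equicontinuity that convexity supplies), whence $\min_{\globparam} g_n \to \min_{\globparam} g_* = g_*(\tilde{\globparam}_{*})$, and this common limit equals $c_{*}(\samp,k)$ by \eqref{eq:thm:2}. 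Thus the right-hand side tends to $0$ in probability, giving $\tilde{\globparam}_n \convPr \tilde{\globparam}_{*}(\samp,k)$ and proving the claim. The only genuinely probabilistic input is (A); the remainder is deterministic convex analysis combined with the already-established \eqref{eq:thm:2}.
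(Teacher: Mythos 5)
Your step (A) coincides with the paper's first move (rerun \cref{thm:emp_risk_conv} with $\globparam$ frozen to get pointwise convergence of the profiled risk), but your step (B) is a genuine gap, and the rest of your argument stands on it. The appendix hypothesis is strong convexity of the loss in $\globparam$ \emph{for fixed embeddings}; it is not joint convexity in $(\globparam,\embedding)$. Partial minimization preserves convexity only under joint convexity, so the profiled risk $g_n(\globparam)=\min_{\embedding}\rERM_{k}(\globparam,\embedding;\dataset_n)$ is in general \emph{not} convex, let alone $m$-strongly convex with a common modulus. Concretely, take $R(\globparam,\embedding)=(\globparam-\embedding)^2+(\embedding^2-1)^2$: for each fixed $\embedding$ this is $2$-strongly convex in $\globparam$, yet
\[
g(\globparam)=\min_{\embedding}R(\globparam,\embedding)
\]
satisfies $g(\pm 1)=0$ and $g(0)=3/4$, so $g$ is not even midpoint convex. (The actual losses here, e.g.\ \eqref{eq:model:nodes}, are likewise far from jointly convex in the embeddings.) Consequently your key inequality $\tfrac{m}{2}\lVert\tilde{\globparam}_{*}-\tilde{\globparam}_n\rVert^2\le g_n(\tilde{\globparam}_{*})-\min_{\globparam}g_n$ is unjustified, and your route to uniform convergence (``local equicontinuity that convexity supplies'') collapses for the same reason; convexity of the limit $g_*$ also cannot be inherited from convexity of $g_n$, since the latter fails.

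The paper's own proof never claims the finite-$n$ profiled risk is convex. It obtains uniform convergence over $\Omega_{\globparam}$ from the \emph{other} appendix hypotheses that your proposal does not use: uniform continuity of the loss in $\globparam$ (uniformly over embeddings) together with compactness of $\Omega_{\globparam}$, via a covering argument. Strong convexity of the loss is invoked only to argue that the \emph{limiting} profiled risk $f(\globparam)=\lim_n\EE[\min_{\embedding}\rERM_{k}(\globparam,\embedding;\dataset_n)]$ is continuous and strictly convex---read off from the explicit limit representations derived in the $p$-sampling and random-walk proofs---so that $f$ has a unique minimizer $\globparam^{*}$; a compactness/subsequence argument with a final contradiction step then gives $\tilde{\globparam}_n\convPr\globparam^{*}$. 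To repair your proof you would need either to adopt this structure (uniform continuity plus compactness for uniform convergence, uniqueness of the minimizer of the limit in place of your quantitative strong-convexity bound), or to strengthen the hypothesis to joint convexity in $(\globparam,\embedding)$, which is neither what the paper assumes nor satisfied by the intended models.
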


We next establish a stability result showing that collecting additional data does not dramatically change learned embeddings.  
The proof is given in \cref{sec:stability-of-embeddings}.
\begin{theorem}
Suppose the conditions of \cref{thm:emp_risk_conv},
and also that the loss function is twice differentiable and the Hessian of the empirical risk is bounded. 
Let $\hat{\embedding}_{n+1}\restrict_n$ denote the restriction of the embeddings $\hat{\embedding}_{n+1}$
to the vertices present in $G_n$.
Then
$
\hat{\embedding}_{n} - \hat{\embedding}_{n+1}\restrict_n \to 0
$
in probability, as $n\to\infty$.
\end{theorem}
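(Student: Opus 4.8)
The plan is to treat the size-$(n+1)$ learning problem as a vanishingly small perturbation of the size-$n$ problem, and then to convert closeness of optimal \emph{values} into closeness of \emph{minimizers} using the curvature supplied by the Hessian hypothesis. Write $F_n(\embedding) := \rERM_k(\embedding; \dataset_n)$ for the relational empirical risk as a function of the embeddings (with the global parameters profiled out), so that $\hat{\embedding}_n = \argmin_{\embedding} F_n$ and, at the next scale, $\hat{\embedding}_{n+1} = \argmin_{\embedding} F_{n+1}$. Note that the point of the theorem is strictly weaker than convergence of the embeddings themselves (which the paper flags as hard): we only compare consecutive estimates, which is exactly what a perturbation argument delivers.

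First I would realize $\dataset_n$ and $\dataset_{n+1}$ on a common probability space via the standard monotone coupling of $p$-sampling: attach to each population vertex an independent uniform label and include it at scale $m$ iff its label is at most $p_m = m/\sqrt{|\mathcal{G}|}$, so that $\dataset_n \subseteq \dataset_{n+1}$. Since $p_{n+1}-p_n$ is a factor $\approx 1/n$ smaller than $p_n$, the vertices present in $\dataset_{n+1}$ but not in $\dataset_n$ form an asymptotically negligible fraction, and---because $k$ is fixed---a single draw $\samp(\dataset_{n+1},k)$ touches one of these new vertices only with probability $o(1)$. The same conclusion should hold for random-walk sampling, where one additionally argues that a walk of fixed length $k$ from a (nearly) uniform start reaches the new vertices with vanishing probability. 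Using this coupling together with the boundedness of the loss, I would then establish a sampling-stability bound $|F_n(\embedding\restrict_n) - F_{n+1}(\embedding)| \le \epsilon_n$ uniformly in $\embedding$, with $\epsilon_n \to 0$: conditioned on the sample avoiding the new vertices the two sampling laws agree, so the gap is controlled by $\|L\|_\infty$ times the probability of touching new structure. In particular transporting the next-scale optimum back to scale $n$ is almost free, $F_n(\hat{\embedding}_{n+1}\restrict_n) \le \min_{\embedding} F_{n+1} + \epsilon_n$.

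Combining this with \cref{thm:emp_risk_conv} gives the squeeze $\min_{\embedding} F_n \le F_n(\hat{\embedding}_{n+1}\restrict_n) \le \min_{\embedding} F_{n+1} + \epsilon_n$, whose two ends both converge to the same constant $c_{*}(\samp,k)$; hence the value gap $F_n(\hat{\embedding}_{n+1}\restrict_n) - \min_{\embedding} F_n \to 0$ in probability and in $L_1$. Finally, reading the Hessian hypothesis as a two-sided bound (a curvature lower bound $\mu$ around the minimizers in the appropriate normalized norm) yields
\[
\tfrac{\mu}{2}\,\bigl\| \hat{\embedding}_{n+1}\restrict_n - \hat{\embedding}_n \bigr\|^2 \;\le\; F_n(\hat{\embedding}_{n+1}\restrict_n) - \min_{\embedding} F_n \;\longrightarrow\; 0,
\]
which is the claim.

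The hard part will be the sampling-stability step, and in particular making it uniform while respecting the growing dimension of the embedding space. For $p$-sampling the monotone coupling is transparent, but for the random-walk samplers the conditional law of the walk on $\dataset_{n+1}$ must be compared to that on $\dataset_n$---including the shift in the start distribution---and one must rule out the walk escaping into newly added structure within $k$ steps. Equally delicate is pinning down the norm: a uniform curvature lower bound cannot hold coordinatewise, since rarely sampled vertices contribute almost no curvature, so the estimate above should be read in the risk-induced (appearance-weighted) norm specified in the appendix, and the compatibility of this norm across scales $n$ and $n+1$ is what ultimately legitimizes the value-to-parameter conversion.
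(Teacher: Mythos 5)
Your proposal follows essentially the same route as the paper's proof: couple the two scales so that a size-$k$ sample from $\dataset_{n+1}$ lies entirely inside $\dataset_n$ with probability $1-o(1)$ (the paper gets this from the expected number of newly appearing edges plus Markov's inequality), deduce that the transported risk $\hat{R}_k(\hat{\embedding}_{n+1}\restrict_n;\dataset_n)$ and the minimal risk $\hat{R}_k(\hat{\embedding}_n;\dataset_n)$ both converge to the common constant $c_{*}(\samp,k)$ of \cref{thm:emp_risk_conv}, and then convert this value gap into a parameter gap using second-order curvature at the minimizer $\hat{\embedding}_n$. The only differences are cosmetic: the paper Taylor-expands first and then reduces to value convergence (whereas you do the squeeze first), and your explicit reading of the Hessian hypothesis as a curvature lower bound is precisely what the paper's final step implicitly requires.
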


The examples of \cref{sec:example_models} do not satisfy the conditions of the theorem because the cross-entropy loss is unbounded.
However, the models can be trivially modified to bound the output probabilities away from 0 and 1.
In this case, the loss is bounded. Further, for the logistic regression model used in the 
experiments the convexity and Hessian conditions also hold, by direct computation.

\subsubsection*{Interpretation and Significance}
The properties we establish are minimal desiderata that one might demand of any sensible learning procedure.
Nevertheless, such results have not been previously established for relational learning methods.
The obstruction is the need for a suitable analogue of the \iid assumption.
The demonstration that population sampling can fill this role is itself a main contribution of the paper.
Indeed, the results we establish are weaker than the analogous guarantees for classical ERM,
and main significance is perhaps the demonstration that such results can be established at all.
This is important both as a foundational step towards a full theoretical analysis of relational learning,
and because it strengthens the analogy with classical ERM.

A strength of our arguments is that they are largely agnostic to the particular choice of model,
mitigating the need for model-specific analysis and justification.
For example, our results include random-walk based graph representation methods as a special case, 
providing some post-hoc support for the use of such methods.

The limits in \cref{thm:emp_risk_conv,thm:glob_conv} depend on the choice of $\samp$.
Accordingly, the limiting risk and learned parameters depend on $\samp$ in the same sense they depend on the choice of predictor
class and the loss function; i.e., $\samp$ is a model component.
This underscores the need to consider the choice in model design, 
either through heuristics---e.g., random-walk sampling upweights the importance of high degree vertices relative to $p$-sampling---or
by trying several choices experimentally.

\section{Experiments}
\label{sec:Experiments}

The practical advantages of using relational ERM  to define new, task-specific, models are: 
(i) Mini-batch SGD can be used in a plug-and-play fashion to solve the optimization problem.
This allows inference to scale to large data. And, (ii) by varying $\samp$ we may improve model quality.
We have used relational ERM to define novel models in \cref{sec:example_models}. 
The models are determined by \eqref{eq:model:nodes} and \eqref{eq:wikipedia}
up to the choice of $\samp$.
We now study these example models empirically.\footnote{Code at \href{https://github.com/wooden-spoon/relational-ERM}{github.com/wooden-spoon/relational-ERM}}
The main observations are: (i) SGD succeeds in quickly fitting the models in all cases.
And, (ii) the choice of $\samp$ has a dramatic effect in practice. 
Additionally, we observe that the best model for the semi-supervised node classification task uses $p$-sampling.
$p$-sampling has not previously been used in the embedding literature, 
and is very different from the random-walk based schemes that are commonly used.

\subsection*{Node classification problems}
\newcommand{\maxf}[1]{{\cellcolor[gray]{0.8}} #1}

We begin with the semi-supervised node classification task 
described in \cref{sec:example_models},
using the model \cref{eq:model:nodes} with different choices of $\samp$.
\begin{wraptable}{r}{5.cm}
 \begin{tabular}{@{} rrr @{}}
   & {Blogs} & {Protein}\tabularnewline
   \midrule
   Vertices & 10,312 & 3,890 \\
   Edges & 333,983 & 76,584  \\
   Label Dim. & 39 & 50 \\
 \end{tabular}
\end{wraptable} 
We study the blog catalog and protein-protein interaction data reported in
\cite{Grover:Leskovec:2016}, summarized by the table to the right.
We pre-process the data to remove self-edges, and 
restrict each network to the largest connected component.
Each vertex in the graph is labeled, 
and $50\%$ of the labels are censored at training time. 
The task is to predict these labels at test time.

\paragraph{Two-stage training.} 
\begin{table}[h!]
\caption{Average Macro-F1 for Two-Stage Training.}\label{tb:two-stage-semisup}
\begin{center}
\begin{tabular}{lc*{2}{S}}
Choice of $\samp$ & {Alg.\ \#} & {Blogs} & {Protein}\tabularnewline
\midrule
rw/skipgram+ns & \ref{alg:rwsg}+\ref{alg:Unigram} & 0.18 & {\maxf{0.16}} \\
rw/induced+ind & \ref{alg:rwinduced}+\ref{alg:induced} & 0.08 & 0.08 \\
rw/induced+ns & \ref{alg:rwinduced}+\ref{alg:Unigram} &  0.18 & {\maxf{0.16}} \\
$p$-samp+ind. & \ref{alg:p}+\ref{alg:induced} & 0.17  & 0.14 \\
$p$-samp+ns & \ref{alg:p}+\ref{alg:Unigram} & {\maxf{0.22}} &  {\maxf{0.16}} \\
unif. edge+ns & \ref{alg:u:edges}+\ref{alg:Unigram} & 0.21 & 0.15 \\
\label{tab:Sampling_comparision}
\end{tabular}
\end{center}
\end{table}

\begin{table*}[t]
\caption{Average Macro-F1 for Simultaneous Training. Columns are labeled by the sampling scheme used to draw test vertices.}\label{tb:simultaneous-semisup}
\begin{center}
   \begin{tabular}{l @{\hskip 3em} *{3}{S} @{\hskip 3em} *{3}{S}}
     & \multicolumn{3}{c}{Blog catalog}
     & \multicolumn{3}{c}{Protein-Protein}\\
     $\samp$ & {Unif.} & {$p$-samp} & {rw} & {Unif.} & {$p$-samp} & {rw} \\ 
     \midrule
     rw/skipgram+ns (Alg.\ \ref{alg:rwsg}+\ref{alg:Unigram}) & 0.20  & 0.26 & 0.27 & 0.25 & .32 & 0.34  \\ 
     $p$-samp+ns (Alg.\ \ref{alg:p}+\ref{alg:Unigram}) & {\maxf{0.30}} & 0.34  & 0.35 & {\maxf{0.30}} & 0.37  & 0.39  \\ 
     Node2Vec (reported) & 0.26 & {-} & {-} & 0.18 & {-} & {-}
   \end{tabular}
\end{center}
\end{table*}

We first train the model \eqref{eq:model:nodes} using no label information
to learn the embeddings (that is, with ${q=0}$).
We then fit a logistic regression to predict vertex features from the trained embeddings. 
This two stage approach is a standard testing procedure in the graph embedding literature, e.g.\ \cite{Perozzi:Al-Rfou:Skiena:2014,Grover:Leskovec:2016}. 
We use the same scoring procedure as Node2Vec \cite{Grover:Leskovec:2016} (average macro F1 scores), 
and, where applicable, the same hyperparameters. 

\cref{tb:two-stage-semisup} shows the effect of varying the sampling scheme used to train the embeddings.
As expected, we observe that the choice of sampling scheme affects the embeddings produced via the learning procedure,
and thus also the outcome of the experiment.
We further observe that sampling non-edges by unigram negative sampling gives better predictive
performance relative to selecting non-edges from the vertex induced subgraph.

\paragraph{Simultaneous training.} 
Next, we fit the model of \cref{sec:example_models} with $q=0.001$---training the embeddings and global variables simultaneously.
Recall that simultaneous training is enabled by the use of relational ERM. 
We choose label predictor $\pi_\globparam$ as logistic regression, and 
adapt the label prediction loss to measure the loss only on vertices in the positive sample. 

There is not a unique procedure for creating a test set for relational data.
We report test scores for test-sets drawn according to several different sampling schemes.
Results are summarized by \cref{tb:simultaneous-semisup}.
We observe:
\begin{itemize}
\item Simultaneous training improves performance.
\item $p$-sampling outperforms the standard rw/skipgram procedure. 
\item This persists irrespective of how the test set is selected (i.e., it is not an artifact of the data splitting procedure).
\end{itemize}
Note that the average computed with uniform vertex sampling is the standard scoring procedure used in the previous table.
The last observation is somewhat surprising: we might have expected a mismatch between the training and testing objectives to degrade performance.
One possible explanation is that the random-walk based sampler excessively downweights low-connectivity vertices, and thus fails to fully exploit their label information.

\subsection*{Wikipedia Category Embeddings}
We consider the task of discovering semantic relations between Wikipedia categories, as described in \cref{sec:example_models}. 
This task is not standard; wholly new model is required. 

We define a relational ERM model by choosing 
category embedding dimension $k=128$,
the loss function $L$ in \eqref{eq:wikipedia}, and $\samp$ as \ref{alg:rwsg}+\ref{alg:Unigram}, the skipgram random walk sampler with unigram negative sampling.
The data $\overline{G}_n$ is the Wikipedia hyperlink network from \cite{Klymko:Gleich:Kolda:2014},
consisting of Wikipedia articles from 2011-09-01 restricted to articles in categories containing at least 100 articles.

The challenge for this task is that the dataset is relatively large---about 1.8M nodes and 28M edges---and
the model is unusual---embeddings are assigned to vertex attributes instead of the vertices themselves.
SGD converges in about 90 minutes on a desktop computer equipped with a Nvidia Titan Xp GPU.
\cref{fig:wiki_embeddings} on page \pageref{fig:wiki_embeddings} visualizes example trained embeddings,
which clearly succeed in capturing latent semantic structure.

\section{Conclusion}

Relational ERM is a generalization of ERM from \iid data to relational data.
The key ideas are introducing $\samp$ as a component of model design, 
which defines an analogue of the empirical distribution,
and using the assumption that the data is sampled from a population network as an analogue of the \iid assumption.
Relational ERM models can be fit automatically using SGD.
Accordingly, relational ERM provides an easy method to specify and fit relational data models.

The results presented here suggest a number of directions for future inquiry.
Foremost: what is the relational analogue of statistical learning theory?
The theory derived in \cref{sec:theory} establishes initial results.
A more complete treatment may provide statistical guidelines for model development. 
Our results hinge critically on the assumption that the data is collected by $p$-sampling;
it is natural to ask how other data-generating mechanisms can be accommodated.
Similarly, it is natural to ask for guidelines for the choice of $\samp$.

\subsubsection*{Acknowledgments}
VV and PO were supported in part by grant FA9550-15-1-0074 of AFOSR.
DB is supported by ONR N00014-15-1-2209, ONR 133691-5102004, NIH 5100481-5500001084, NSF CCF-1740833, the Alfred P. Sloan Foundation, the John Simon Guggenheim Foundation, Facebook, Amazon, and IBM.
The Titan Xp used for this research was donated by the NVIDIA Corporation.

\printbibliography

\newpage
\appendix
\onecolumn

\title{Proofs of Theoretical Results for Empirical Risk Minimization and Stochastic Gradient Descent for Relational Data}
\date{\vspace{-5ex}}
\maketitle

\newcommand\independent{\protect\mathpalette{\protect\independenT}{\perp}}
\def\independenT#1#2{\mathrel{\rlap{$#1#2$}\mkern2mu{#1#2}}}

\begin{refsection}

\section{Overview of Proofs}
\newcommand{\KEG}{\Gamma}
\newcommand{\gloc}{x}
\newcommand{\glocc}{y}
\newcommand{\glab}{\nu}
\newcommand{\st}{\ :\ }

The appendix is devoted to proving the theoretical results of the paper.  
These results are obtained subject to the assumption that the data is collected by $p$-sampling. 
This assumption is natural in the sense that it provides a reasonable middle ground between a realistic data collection 
assumption---$p$-sampling can result in complex models capturing many 
important graph phenomena \cite{Caron:Fox:2017,Veitch:Roy:2015,Borgs:Chayes:Cohn:Holden:2016}---and 
mathematical tractability---we are able to establish precise guarantees. 

The appendix is organized as follows. 
We begin by recalling the connection between $p$-sampling and \emph{graphex processes} in \cref{graphex_processes};
this affords a useful explicit representation of the data generating process. 
In \cref{exchangeable_pairs}, we recall the method of exchangeable pairs, a technical tool required for our convergence proofs.
Next, in \cref{notation}, we collect the necessary notation and definitions. 
Empirical risk convergence results for $p$-sampling are then proved in \cref{sec:psamp} and results for the random-walk in \cref{sec:randomwalk}.
Convergence results for the global parameters are established in \cref{sec:glob_param_conv}.
Finally, in \cref{sec:stability-of-embeddings}, we show that learned embeddings are stable in sense that they are not changed much by collecting a small amount of additional data.

\section{Preliminaries}
\subsection{Graphex processes}
\label{graphex_processes}
Recall the setup for the theoretical results: we consider a very large population network
$P_{t}$ with $t$ edges, and we study the graph-valued stochastic
process $(G_{n}^{t})_{n\in[0,\sqrt{t})}$ given by taking each $G_{n}^{t}$
to be an $n/\sqrt{t}$-sample from $P_{t}$ and requiring these samples
to cohere in the obvious way. We idealize the population size as infinite
by taking the limit $t\to\infty$. The limiting stochastic process $(G_{n})_{n\in\NNReals}$
is well defined, and is called a \emph{graphex process} \cite{Borgs:Chayes:Cohn:Veitch:2017}. 

Graphex processes have a convenient explicit representation
in terms of (generalized) \emph{graphons} \cite{Veitch:Roy:2015,Borgs:Chayes:Cohn:Holden:2016,Caron:Fox:2017}.
\begin{defn}
A \defnphrase{graphon} is an integrable function $W:\NNReals^2 \to [0,1]$.
\end{defn} 
\begin{remark}
This notion of graphon is somewhat more restricted than graphons (or graphexes) considered in full generality, but it suffices 
for our purposes and avoids some technical details. 
\end{remark}

We now describe the generative model for a graphex process with graphon $W$.
Informally, a graph is generated by 
(i) sampling a collection of vertices $\{\glab_i\}$ each with latent features $\gloc_i$,
and (ii) randomly connecting each pair of vertices with probability dependent on the latent features.    
Let 
\[
\PP=\{\eta_i\}_{i\in \Nats}=\{(\glab(\eta_i),\gloc(\eta_{i}))\}_{i\in \Nats}
\]
 be a Poisson (point) process on $\NNReals \times \NNReals$
with intensity $\Lebesgue \otimes \Lebesgue$, where $\Lebesgue$ is the Lebesgue measure.
Each atom of the point process is a candidate vertex of the sampled graph;
the $\{\glab_i\}$ are interpreted as (real-valued) labels of the vertices, and the $\{\gloc_i\}$ as latent features that explain the graph structure.
Each pair of points $(\eta_i, \eta_j)$ with $i \le j$ is then connected independently according to
\[
1[(\eta_i, \eta_j)\text{ connected}] \distind \bernDist(W(\gloc_i, \gloc_j)).
\]
This procedure generates an infinite graph.  
To produce a finite sample of size $n$,
we restrict to the collection of edges $\KEG_n = \{(\eta_i, \eta_j) \st \eta_i, \eta_j \le n \}$.
That is, we report the subgraph induced by restricting to vertices with label less than $n$,
and removing all vertices that do not connect to any edges in the subgraph.
This last step is critical; in general there are an infinite number of points of the Poisson process
such that $\eta_i < n$, but only a finite number of them will connect to any edge in the induced subgraph.

Modeling $G_n$ as collected by $p$-sampling is essentially equivalent to positing that
$G_n$ is the graph structure of $\KEG_n$ generated by some graphon $W$.
Strictly speaking, the $p$-sampling model induces a slightly more general generative model that 
allows for both isolated edges that never interact with the main graph structure, 
and for infinite star structures; see \cite{Borgs:Chayes:Cohn:Veitch:2017}.
Throughout the appendix, we ignore this complication and assume that the dataset graph is generated by some graphon.  
It is straightforward but notationally cumbersome to extend our results to $p$-sampling in full generality.

\subsection{Technical Background: Exchangeable Pairs}
\label{exchangeable_pairs}
We will need to bound the deviation of the (normalized) degree of a vertex from its expectation.
To that end, we briefly recall the method of exchangeable pairs; see \cite{Chaterjee:2005} for details.

\begin{defn}
A pair of real random variables $(X,X')$ is said to be exchangeable if $$(X,X')\overset{d}{=}(X',X).$$
\end{defn}
Let $f:\mathbb{R}\rightarrow \mathbb{R}$ and $F:\mathbb{R}^2\rightarrow \mathbb{R}$ be measurable function such that: $$\mathbb{E}(F(X,X')|X)\overset{a.s}{=}f(X),~\rm{and}~ F(X,X')=-F(X',X).$$

Let 
\[
v(X)\triangleq \frac{1}{2}\mathbb{E}\Big(\big(f(X)-f(X')\big)F(X,X')\Big|X\Big),
\] 
and suppose that $|v(X)|\overset{a.s}{\le} C$ for some $C\in \mathbb{R}$.
Then 
\[
\forall x>0,~P(|f(X)-\mathbb{E}(f(X))|\ge x)\le 2 e^{-\frac{x^2}{2C}}.
\]
Further,  for all $p>1$  and $x>0$ it holds that: 
\[
P(|f(X)-\mathbb{E}(f(X))|>x)\le \frac{(2p-1)^p\|v(X)|\|^p_p}{x^p}.
\]

\subsection{Notation}
\label{notation}
\newcommand{\paths}{\mathcal{P}}
\newcommand{\ppoint}{\eta} %

For convenient reference, we include a glossary of important notation.

First, notation to refer to important graph properties:
\begin{itemize}
\item $\PP = \{\ppoint_i = (\glab(\ppoint_i),\gloc(\ppoint_i))\}$ is the latent Poisson process that defines the graphex process in \cref{graphex_processes}.
The labels are $\glab$ and the latent variables are $\gloc$. 
\item $\PP_n\triangleq \PP \cap [0,n] \times \mathbb{R}^+$ is the restriction of the Poisson process to atoms with labels in $[0,n]$.
\item  To build the graph from the point of process $\PP_n$ we need to introduce a process of independent uniform variables. 
Let
\[
\mathbb{U}_{\Pi}\triangleq (U_{\ppoint_i,\ppoint_j})_{\ppoint_i,\ppoint_j\in \Pi}
\] be such that $\mathbb{U}_{\Pi}|\Pi$ is an independent process where $U_{\ppoint_1,\ppoint_2}\given \Pi \distiid \uniDist(0,1)$

\item $\KEG_n \subset \NNReals^2$ is the (random) edge set of the graphex process at size $n$.
\item $V(\KEG_n) \subset \NNReals$ is the set of vertices of $\KEG_n$. 
\item $\bar{\KEG}_n = \{(\ppoint_i,\ppoint_j) \st \ppoint_i,\ppoint_j \in V(\KEG_n) \text{ and } (\ppoint_i,\ppoint_j) \notin \KEG_n\}$ is all pairs of points in $\KEG_n$ that 
are not connected by an edge.
\item The number of edges in the graph is $E_n = \abs{\KEG_n}$ %
\item The neighbors of $\eta$ in $\KEG_n$ are
\[
\mathcal{N}_n(\eta)\triangleq \{\eta' \st (\eta,\eta')\in \mathcal{P}_1(\KEG_n)\}
\]
\item For all $k$, the set of paths of length $k$ in $\KEG_n$ is 
\[
\paths_k(\KEG_n) \triangleq \{(\ppoint_i)_{i\le k+1}\in V(\KEG_n)^{k+1} \st ~(\ppoint_i,\ppoint_{i+1})\in \KEG_n\ \forall i\le k \}.
\]
\item The degree of $\glab$ in $\KEG_n$ is $d_n(\ppoint )$.
\item Asymptotically, the number of edges of a graphex process scales as $n^2$ \cite{Borgs:Chayes:Cohn:Holden:2016}. 
Let  $\mathcal{E}\in \NNReals$ be the proportionality constant 
\[
\mathcal{E}\triangleq \lim_{n\rightarrow \infty} \frac{E_n}{n^2}.
\]

\end{itemize}

Next, we introduce notation relating to model parameters. 
Treating the embedding parameters requires some care.
The collection of vertices of the graph is a random quantity, and so the embedding parameters must also be modeled as random.
For graphex processes, this means the embedding parameters depend on the latent Poisson process used in the generative model.
To phrase a theoretical result, it is necessary to assume something about the structure of the dependence.
The choice we make here is: the embedding parameters are taken to be markings of the Poisson process $\PP$.
In words, the embedding parameter of a vertex may depend on the (possibly latent) properties of that vertex, but the embeddings
are independent of everything else. 
\begin{itemize}
\item  
The collection of all possible parameters is: 
\[
\Omega_{\genparam}^{\Pi}\triangleq \{(\embedding_{\ppoint},\globparam)_{\ppoint\in \Pi} \st \embedding_{\ppoint}\in \Omega_{\genparam}\ \forall \ppoint \in \Pi \text{ and }\globparam \in \Omega_{\globparam}\}.
\]
Note that we attach a copy of the global parameter to each vertex for mathematical convenience.

\item For all $\bar{\genparam}\in \Omega_{\genparam}^{\Pi}$, let $\embedding(\bar{\genparam})$ denote the projection on $\Omega_{\embedding}^{\Pi}$ and let $\globparam(\bar{\genparam})$ denote the projection on $\Omega_{\globparam}.$
\item The following concepts and notations are needed to build a marking of the Poisson process:
Let $m(\cdot,\cdot)$ be a distributional kernel on  $\mathbb{R}_+\times \Omega_{\genparam}$. We generate the marks according to a distribution $\mathcal{Q}_{\genparam}^{\Pi}$ on $\Omega_{\genparam}^{\Pi}$, conditional on $\PP$, such that if $\bar{\genparam}|\PP\sim \mathcal{Q}_{\genparam}^{\PP}$ then:
\begin{itemize}
 \item  $(\bar{\genparam}_{\eta})_{\ppoint \in \Pi}$ is an independent process 
\item  $\bar{\genparam}_{\eta}|\Pi\sim m(\gloc(\ppoint),\cdot)$ for all $\eta \in \PP$
\end{itemize}

\item Let $\bar{\PP}_n({\genparam})\triangleq (\PP_n,\mathbb{U}|_{\PP_n},\genparam|_n)$ the augmented object 
that carries information about both the graph structure ($\PP_n,\mathbb{U}|_{\PP_n}$) and the model parameters $\genparam$.

\end{itemize}

\section{Basic asymptotics for $p$-sampling}
\label{sec:psamp}

We begin by establishing the result for $p$-sampling, with $p=k/\sqrt{n}$ and 
the non-edges chosen by taking the induced subgraph.
This is the simplest case, and is useful for the introduction of ideas and notation. 
We consider more general approaches to negative sampling in the next section, where it is treated in tandem with random walk sampling.
The same arguments can be used to extend $p$-sampling to allow for, e.g., unigram negative sampling used in our experiments.

For all ${\bar{\genparam}}\in \Omega_{\genparam}^{\PP}$,
and all $\KEG'_k \subset \KEG$, let $L(\KEG'_k, \bar{\genparam})$ 
denote the loss on $\KEG'_k$ where $\bar{\genparam}$ is restricted to the embeddings (and global parameters) associated with $\KEG'_k$.

\begin{theorem}
\label{thm:p-samp_convergence}
Let $\bar{\genparam}$ a random variable taking value in $\Omega_{\genparam}^{\Pi}$ such that $\bar{\genparam}~|\Pi\sim\mathcal{Q}_{\genparam}^{\Pi}$, for a certain kernel $m$, 
then there  is some constant $c^{\mathrm{ps}}_{m}\in\NNReals$ such that if $\|\mathcal{L}\|_{\infty}<\infty$ then 
\[
\hat{R}_k( \KEG_n,  \bar{\genparam}) \to c^{\mathrm{ps}}_{m}
\] 
both \as\ and in $L_{1}$, as $n\to\infty$.

Moreover there is some constant $c^{\mathrm{ps}}_{*}\in\NNReals$ such that 
\[
\min_{\genparam}\hat{R}_k( \KEG_n, \genparam ) \to c^{\mathrm{ps}}_{*}
\]
 both \as\ and in $L_{1}$, as $n\to\infty$. 
\end{theorem}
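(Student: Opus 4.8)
The plan is to turn the conditional expectation $\hat{R}_k(\KEG_n,\bar{\genparam})$ into a normalized functional of the graphex process and then to prove a law of large numbers for that functional. The starting point is the projectivity of $p$-sampling recalled in \cref{graphex_processes}: $p$-sampling $\KEG_n$ with the prescribed $p$ is again a graphex sample whose scale does not grow with $n$, so that $\samp(\KEG_n,k)$ touches only $O(1)$ vertices and edges. Since the loss is bounded and splits into contributions attached to the vertices, the present edges, and the induced non-edges of the sample, I would first expand $\hat{R}_k(\KEG_n,\bar{\genparam})$ by conditioning on which atoms of $\PP$ survive the independent thinning. Each atom $\ppoint$ survives with probability $p$ and is retained only if a neighbor also survives, which has probability $1-(1-p)^{d_n(\ppoint)}$; each edge of $\KEG_n$ is retained with probability $p^2$; and analogous factors govern the non-edges. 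Collecting terms writes $\hat{R}_k$ as a finite sum of bounded, degree-dependent statistics of $(\KEG_n,\bar{\genparam})$.

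The key analytic input is that the normalized degree $d_n(\ppoint)/n$ concentrates around the deterministic value $g(\gloc(\ppoint))\defeq\int W(\gloc(\ppoint),y)\,dy$. I would prove this with the method of exchangeable pairs from \cref{exchangeable_pairs}: taking $X$ to be the (normalized) degree of a vertex and $F$ an antisymmetric statistic that toggles a single connection, the variance proxy $v(X)$ is bounded because $W\le 1$, which gives sub-Gaussian deviations and, via Borel–Cantelli, a.s. convergence of $d_n(\ppoint)/n$ to $g(\gloc(\ppoint))$. Substituting these limits into the expansion and computing $\EE[\hat{R}_k(\KEG_n,\bar{\genparam})]$ by Mecke's/Campbell's formula for $\PP$ and the connection variables $\mathbb{U}_\Pi$—integrating $W$ against the edge losses, $1-W$ against the non-edge losses, and the marking kernel $m$ against the embedding arguments—identifies the deterministic constant $c^{\mathrm{ps}}_m$, the $n$-dependence cancelling exactly because of the chosen $p$-scaling. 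The fluctuation $\hat{R}_k(\KEG_n,\bar{\genparam})-\EE[\hat{R}_k(\KEG_n,\bar{\genparam})]$ then vanishes a.s. by the same degree concentration together with the independence structure of the thinning, and $L_1$ convergence follows from a.s. convergence plus boundedness of $L$ by dominated convergence.

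For the second statement I would set $c^{\mathrm{ps}}_*\defeq\inf_m c^{\mathrm{ps}}_m$, the infimum over marking kernels. The upper bound is immediate: for any kernel $m$, drawing $\bar{\genparam}\sim\mathcal{Q}_{\genparam}^{\Pi}$ produces a feasible parameter, so $\min_{\genparam}\hat{R}_k(\KEG_n,\genparam)\le\hat{R}_k(\KEG_n,\bar{\genparam})$ pathwise, and the first part gives $\limsup_n\min_{\genparam}\hat{R}_k\le c^{\mathrm{ps}}_m$; optimizing over $m$ yields $\limsup_n\min_{\genparam}\hat{R}_k\le c^{\mathrm{ps}}_*$.

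The hard part, and the main obstacle, is the matching lower bound, because the empirical minimizer $\hat{\genparam}_n$ assigns an essentially free embedding to each of the growing number of vertices and need not be a marking of $\PP$. My plan is to exploit that, for fixed $k$, the sampled graph meets only $O(1)$ vertices, so the risk is invariant under relabelling vertices that share a latent feature; I would symmetrize the minimizer over such relabellings—equivalently, pass to the empirical distribution of (feature, embedding) pairs induced by $\hat{\genparam}_n$—and argue, using tightness on the (assumed compact) embedding space together with lower semicontinuity of the limiting functional $m\mapsto c^{\mathrm{ps}}_m$, that every subsequential limit is realized by some kernel $m^{*}$ with $c^{\mathrm{ps}}_{m^{*}}\le\liminf_n\min_{\genparam}\hat{R}_k$. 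Together with the upper bound this forces convergence to $c^{\mathrm{ps}}_*$, both a.s. and in $L_1$. Making the symmetrization and the interchange of minimization with the $n\to\infty$ limit rigorous—while controlling the coupling between the embeddings of vertices that co-occur in a sample—is where the genuine work lies.
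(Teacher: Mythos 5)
Your proposal takes a genuinely different route from the paper, and both halves have gaps; the second is fatal as written. For context: the paper never identifies the limiting constants. It observes that, by exchangeability of the graphex process, $k/n$-sampling from $\KEG_n$ is distributionally the same as uniformly relabelling the vertices with labels in $[0,n)$ and keeping labels below $k$, so that $\hat{R}_k(\KEG_n,\bar{\genparam})=\EE[L(\KEG_k,\bar{\genparam})\given\mathcal{F}_n]$, where $\mathcal{F}_n$ is the $\sigma$-field of the graph with labels in $[0,n)$ forgotten. The first claim is then reverse martingale convergence along the decreasing filtration $(\mathcal{F}_n)$ plus triviality of $\mathcal{F}_\infty$; the second follows because $\min_{\genparam}\hat{R}_k(\KEG_n,\genparam)$ is a bounded reverse supermartingale for the same filtration (restrict the minimization to $\mathcal{F}_n$-measurable parameter assignments and use the tower property), so it converges a.s.\ and in $L_1$ to an $\mathcal{F}_\infty$-measurable, hence constant, limit. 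Against this, your first-statement plan has two problems. The theorem assumes only $\|L\|_\infty<\infty$; it does not assume the loss splits into vertex, edge and non-edge contributions, so degree concentration is not the right invariant: for a general bounded loss of the sampled subgraph you would need a.s.\ convergence of the whole conditional law of the marked sample (counts of all induced substructures together with their marks), which is a much stronger statement than $d_n(\ppoint)/n\to\int W(\gloc(\ppoint),y)\,dy$. Also, by projectivity of $p$-sampling the mean $\EE[\hat{R}_k(\KEG_n,\bar{\genparam})]=\EE[L(\KEG_k,\bar{\genparam})]$ is exactly constant in $n$, so the Mecke/Campbell computation is unnecessary; the entire content of the first statement is the a.s.\ fluctuation control, which is precisely the part you assert rather than prove.

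The genuine missing idea is in your second statement. You define $c^{\mathrm{ps}}_{*}=\inf_m c^{\mathrm{ps}}_m$ over marking kernels and need the lower bound $\liminf_n\min_{\genparam}\hat{R}_k(\KEG_n,\genparam)\ge\inf_m c^{\mathrm{ps}}_m$, but the unconstrained minimizer $\hat{\genparam}_n$ can tune each vertex's embedding to its \emph{realized} neighbourhood (the realized connection variables $\mathbb{U}_\Pi$), not merely to its latent feature, and the risk couples embeddings of vertices that co-occur in a sample; it is therefore not a functional of the empirical (feature, embedding) marginal that your symmetrization produces. Turning a subsequential limit of $\hat{\genparam}_n$ into a kernel $m^*$ with $c^{\mathrm{ps}}_{m^*}\le\liminf_n\min_{\genparam}\hat{R}_k$ would require showing that optimal embeddings are asymptotically conditionally independent given the latent features --- features which are not even observable functions of the graph --- and it is not clear the identification $c^{\mathrm{ps}}_{*}=\inf_m c^{\mathrm{ps}}_m$ is even true. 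You acknowledge this is ``where the genuine work lies,'' but that work is the theorem: without it your argument only yields the $\limsup$ half. The paper's supermartingale argument sidesteps the issue entirely by proving convergence to \emph{some} constant without characterizing it, which is exactly what the statement demands and is why the proof is short.
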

\begin{proof} We will first prove the first statement. Let $\bar{\genparam}|\Pi\sim \mathcal{Q}_{\genparam}^{\Pi}$, let $\KEG(\bar{\genparam})$ be the edge set of $\bar{\Pi}(\bar{\genparam})$, and 
let $\KEG^n(\bar{\genparam})$ be the partially labeled graph obtained from $\KEG(\bar{\genparam})$ by forgetting all labels in $[0,n)$ (but keeping larger labels and the embeddings $\genparam$).
Let $\mathcal{F}_n(\bar{\genparam})$ be the $\sigma$-field generated by $\KEG^n(\bar{\genparam})$. The critical observation is  
\begin{equation}\label{p_sam}
\hat{R}_k( \KEG_n,  \bar{\genparam}) = \EE[L(\KEG_k,\bar{\genparam}) \given \mathcal{F}_n(\bar{\genparam})].
\end{equation}
The reason is that choosing a graph by $k/n$-sampling is equivalent uniformly relabeling the vertices in $[0,n)$ and restricting to labels less than $k$;
averaging over this random relabeling operation is precisely the expectation on the righthand side.%

By the reverse martingale convergence theorem we get that:
\[
\hat{R}_k( \KEG_n,  \bar{\genparam}) \xrightarrow{\as,L_1}  \EE[L(\KEG_k, \bar{\genparam}) \given \mathcal{F}_{\infty}(\bar{\genparam})],
\] 
but as $\mathcal{F}_{\infty}(\bar{\genparam})$ is a trivial sigma-algebra we get the desired result.

We will now prove the second statement. 
Let $\KEG^n$ be the partially labeled graph obtained from $\KEG$ by forgetting all labels in $[0,n)$ and
let $\mathcal{F}_n$ be the $\sigma$-field generated by $\KEG^n$. Further, we denote the set of embeddings of the graph $\KEG^m$ by: 
\[
\Omega_{\genparam}^{\KEG^m}\triangleq \{(\embedding_{\mathcal{V},\globparam})_{\mathcal{V}\in \KEG^m}:\forall \mathcal{V}\in V(\KEG^m) ~\embedding_{\mathcal{V}}\in \Omega_{\embedding}, \globparam \in \Omega_{\globparam}\}.
\]

We are now ready to state the proof.
Let $m\le n$, and observe that:
\begin{align}
\EE[\min_{\genparam\in \Omega_{\genparam}^{\KEG^n}} \hat{R}_k(\KEG_n, \genparam) \given \mathcal{F}_m] &\le \min_{\genparam\in \Omega_{\genparam}^{\KEG^m}} \EE[ L(\KEG_k,\genparam) \given \mathcal{F}_m]\\
& = \min_{\genparam\in \Omega_{\genparam}^{\KEG^m}} \hat{R}_k(\KEG_n, \genparam).
\end{align}
Thus, $(\min_{\genparam\in \Omega_{\genparam}^{\KEG^n} } \hat{R}_k(\KEG_n, \genparam))_{n\in\NNReals}$ is a supermartingale with respect to the filtration $(\mathcal{F}_n)_{n\in\NNReals}$.
Moreover, by assumption, the loss is bounded and thus so also is the empirical risk. 
Supermartingale convergence then establishes that $\min_{\genparam\in \Omega_{\genparam}^{\KEG^n} } \hat{R}_k(\KEG_n, \genparam))$ converges almost surely and in $L_1$ to some random variable
that is measureable with respect to $\mathcal{F}_\infty$. 
The proof is completed by the fact that $\mathcal{F}_\infty$ is trivial.%
\end{proof}

\section{Basic asymptotics for random-walk sampling}
\label{sec:randomwalk}

In this section we establish the convergence of the relational empirical risk defined by the random walk.
The argument proceeds as follows:
We first recast the subsampling algorithm as a random probability measure, measurable with respect to the dataset graph $\KEG_n$.
Producing a graph according to the sampling algorithm is the same as drawing a graph according to the random measure.
Establishing that the relational empirical risk converges then amounts to establishing that expectations with respect to this random measure converge;
this is the content of \cref{thm:conv_of_RERM}. 
To establish this result, we show in \cref{lem:rw_simp_rand_meas} that sampling from the random-walk random measure is asymptotically equivalent 
to a simpler sampling procedure that depends only on the properties of the graphex process and not on the details of the dataset.
We allow for very general negative sampling distributions in this result; we show that how to specialize to the important case of 
(a power of) the unigram distribution in \cref{lem:rw_unigram_ns}.
  
\subsection{Random-walk Notation}
\label{ssec:rw_notation}
\newcommand{\walk}{H}

We begin with a formal description of the subsampling procedure that defines the relational empirical risk. 
We will work with random subset of the Poisson process $\PP$; these translate to random subgraphs of $\KEG$ in the obvious way. Namely,
if the sampler selects $\ppoint_i = (\glab_i,\gloc_i)$ in the Poisson process, then it selects $\ppoint_i$ in $\KEG$.

Sampling follows a two stage procedure: we choose a random walk, and then augment this random walk with additional vertices---this is the negative-sampling step. 
The following introduces much of the additional notation we require for this section.
\begin{defn}[Random-walk sampler]
Let $\mu_n$ be a (random) probability measure over $V(\KEG_n)$.
Let $\walk = (\ppoint_i)_{i\le M}= (\glab(\ppoint_i),\lambda(\ppoint_i))_{i\le M}$ be a sequence of vertices sampled according to:
\begin{enumerate}
\item (random-walk) $\ppoint_1 \sim \frac{d_n(\ppoint_1)}{2E_n}$ and let $\ppoint_i|\ppoint_{i-1}\sim \rm{unif}(\mathcal{N}_n(\ppoint_{i-1}))$ for $i\in (2,\dots,r+1)$. 
\item (augmentation) $\ppoint_{ r+2:M}$  be a sequence of additional vertices sampled from $\mu_n$ independently from each other and also from $(\ppoint_1,\dots,\ppoint_{r+1})$.  
\end{enumerate}
Let $G_{\walk}$ be the vertex induced subgraph of $\KEG_n$. 
Let $P_n = \Pr(G_{\walk} \in \cdot \given \bar{\Pi}_n(\bar{\genparam})) $ be the random probability distribution over subgraphs induced by this sampling scheme.
\end{defn}

With this notation in hand, We rewrite the loss function and the risk in a mathematically convenient form
\begin{defn}[Loss and risk]
The loss on a subsample is 
\[
L(G_{\walk}, \bar{\genparam})\in [0,1],
\] 
where we implicity restrict to the embeddings (and global parameters) associated with vertices in $G_{\walk}$. 
The empirical risk is 
\[
\EE_{P_n}[L(G_{\walk},\bar{\genparam})|\bar{\Pi}_n(\bar{\genparam})].
\]
\end{defn}

\begin{remark}
Note that the subgraphs produced by the sampling algorithm explicitly include all edges and non-edges of the graph.
However, the loss may (and generally will) depend on only a subset of the pairs. 
In this fashion, we allow for the practically necessary division between negative and positive examples.
Skipgram augmentation can be handled with the same strategy.    
\end{remark}

We impose a technical condition on the distribution that the additional vertices are drawn from. 
Intuitively, the condition is that the distribution is not too sensitive to details of the dataset in the large data limit.
\begin{defn}[Augmentation distribution]
\label{defn:augmentation_distribution}
We say $\mu_n$ is an \defnphrase{asymptotically exchangeable augmentation distribution} if is there 
is a $\mu$ such that
\begin{itemize}
\item There is a deterministic function $f$ such that $\mu(\eta)=f(\gloc(\eta))$%
\item $\|\mu_n(\cdot)-\frac{\mu(\cdot) \mathbb{I}(\cdot \in \KEG_n)}{n~Z_n}\|_{TV}\convPr{0},$ where $Z_n\triangleq  \frac{1}{n}\sum_{\eta\in \PP_n}{\mu(\eta)}.$
\end{itemize}
\end{defn} 
\cref{lem:rw_unigram_ns} establishes that the unigram distribution respects these conditions.

\subsection{Technical lemmas}

We begin with some technical inequalities controlling sums over the latent Poisson process.
To interpret the theorem, note that the degree of a vertex with latent property $\glocc$ is given by $f_n(\glocc,\PP)$ in the theorem statement.
\begin{lemma}\label{toronto}
\label{lem:pp_sum_ineqs}
Let  $(U_{\gloc(\eta)})_{\eta\in \Pi}$ be such that $(U_{\gloc(\eta)})_{\eta\in \Pi}|\Pi$ is distributed as a process of independent uniforms in $[0,1]$ and let 
\[
f_n(\glocc,\PP)\triangleq \sum_{\ppoint\in \PP_n}\mathbb{I}(U_{\gloc(\eta)}\le W(\glocc,\gloc)),
\]
for all $y\in\NNReals$.
Then the following hold:
\begin{enumerate}
\item
$\forall y\in\NNReals$ such that $W(\glocc,\cdot)\ge n^{-1+\frac{\epsilon}{4}}$, 
there are $p, K > 0$ such that $\forall \beta > 0$,
\[
\Pr(\big|\frac{  f_n(\glocc,\PP)}{n W(\glocc,\cdot)}-1\big|\ge \beta)\le \frac{K}{n^3 \beta^p}.
\] 
\item $\forall p>0 ,~\exists  K_p$ such that $\forall \beta>0$
\[
\Pr(\big|\frac{  f_n(\glocc,\PP))}{n }-W(\glocc,\cdot)\big|\ge \beta)\le \frac{K_p}{n^p \beta^{2p}}
\] and 
\[
\Pr(\big|\frac{ E_n}{n^2\mathcal {E}}-1\big|\ge \beta)\le \frac{K_p}{n^p \beta^{2p}}.
\]
\item $\exists K\in \NNReals$ such that  $\forall \glocc \in \NNReals$  such that $W(\glocc,\cdot)\le n^{-1+\frac{\epsilon}{4}}$  then 
$\Pr(f_n(\PP,\glocc) \ge n^{\frac{\epsilon}{2}})\le \frac{K}{n^3}.$
 
\end{enumerate}

\end{lemma}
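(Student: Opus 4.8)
The plan is to recognize every quantity in the statement as a functional of the marked latent Poisson process and to exploit one clean structural fact: the degree functional $f_n(\glocc,\PP)$ is, after marking and thinning, \emph{exactly} a Poisson random variable, whereas the edge count $E_n$ is not. Concretely, mark each atom $\eta\in\PP_n$ with its connection variable $U_\eta\sim\uniDist(0,1)$, so that the marked process on $[0,n]\times\NNReals\times[0,1]$ is Poisson with intensity $\Lebesgue\restrict_{[0,n]}\otimes\Lebesgue\otimes\Lebesgue\restrict_{[0,1]}$. The event $\{U_\eta\le W(\glocc,\gloc(\eta))\}$ thins this process, and by the thinning/mapping theorem the surviving count $f_n(\glocc,\PP)$ is Poisson with mean $\int_0^n\!\!\int_0^\infty W(\glocc,x)\,dx\,dl=n\,W(\glocc,\cdot)$, where $W(\glocc,\cdot)=\int_0^\infty W(\glocc,x)\,dx$ is finite for a.e.\ $\glocc$ by integrability of $W$. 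Treating the source as an external point (or, if it is itself an atom of $\PP$, invoking Slivnyak--Mecke so that the self-pair is irrelevant and removing one atom does not change the thinned law), we get $f_n(\glocc,\PP)\sim\poiDist(n\,W(\glocc,\cdot))$ \emph{exactly}, so all three degree statements collapse to tail and moment estimates for a single Poisson variable.

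With this reduction I would use the standard bound on central moments of $\poiDist(\lambda)$, namely $\EE|X-\lambda|^{2m}\le C_m(\lambda^m\vee\lambda)$. For Part~1, on $\{W(\glocc,\cdot)\ge n^{-1+\epsilon/4}\}$ the mean $\lambda=n\,W(\glocc,\cdot)\ge n^{\epsilon/4}$ grows, so Markov's inequality at order $2m$ gives $\Pr(|f_n/\lambda-1|\ge\beta)\le C_m\beta^{-2m}\lambda^{-m}\le C_m\beta^{-2m}n^{-m\epsilon/4}$; taking $m\ge 12/\epsilon$ and $p=2m$ yields $K/(n^3\beta^p)$, uniformly in $\glocc$. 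Part~2 (degree term) is the same estimate in additive normalization: $\Pr(|f_n/n-W(\glocc,\cdot)|\ge\beta)\le\EE|f_n-\lambda|^{2p}/(n\beta)^{2p}\le K_p/(n^p\beta^{2p})$, where $K_p$ may depend on $\glocc$ through $W(\glocc,\cdot)^p$. For Part~3, on $\{W(\glocc,\cdot)\le n^{-1+\epsilon/4}\}$ the mean $\lambda\le n^{\epsilon/4}$ sits far below the threshold $n^{\epsilon/2}$, so the Poisson upper-tail bound $\Pr(X\ge a)\le e^{-\lambda}(e\lambda/a)^a$ with $a=n^{\epsilon/2}$ decays super-polynomially and is in particular $\le K/n^3$.

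The edge-count bound in Part~2 is the genuinely harder piece, because $E_n=\sum_{\{\eta_i,\eta_j\}\subset\PP_n}\mathbb{I}[\text{connected}]$ is a \emph{degree-two} functional: pairs sharing a vertex are dependent, so thinning no longer delivers a Poisson law. Here I would split $E_n-\EE E_n$ into the conditional fluctuation and the U-statistic fluctuation. Conditionally on $\PP_n$, $E_n$ is a sum of independent $[0,1]$-valued Bernoullis, so Rosenthal/Bernstein moment bounds give $\EE[|E_n-\EE[E_n\mid\PP_n]|^{2p}\mid\PP_n]\le C_p(\EE[E_n\mid\PP_n])^p=O(n^{2p})$, which is more than enough. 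The binding term is the conditional mean $\EE[E_n\mid\PP_n]=\sum_{i<j}W(\gloc_i,\gloc_j)$, a second-order Poisson U-statistic with bounded kernel whose variance is dominated by the H\'ajek (kernel) term of order $n^3$. Controlling its $2p$-th moment by $O(n^{3p})$ gives $\Pr(|\EE[E_n\mid\PP_n]-n^2\mathcal{E}|\ge n^2\beta)\le K_p/(n^p\beta^{2p})$, and combining this with the faster conditional term (and recalling $\mathcal{E}=\lim_n E_n/n^2=\tfrac12\iint_{\NNReals^2}W$) yields the stated bound for $E_n/(n^2\mathcal{E})$. The main obstacle is precisely this last moment control: establishing the $n^{3p}$ scaling of all high moments of the U-statistic---equivalently, that its fluctuations stay on the H\'ajek-projection scale $n^{3/2}$---is what forces either the Poisson chaos moment inequalities or the method of exchangeable pairs recalled in \cref{exchangeable_pairs} (resampling a single atom's latent coordinate), whereas everything involving the degree reduces to elementary Poisson estimates.
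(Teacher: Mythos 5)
Your proposal is correct, and it takes a genuinely different route from the paper's. The paper proves Parts 1 and 2 with the exchangeable-pairs method recalled in its appendix: it swaps a uniformly chosen unit label-interval $[\bar{J},\bar{J}+1]$ of the latent Poisson process with the fresh interval $[n,n+1]$, verifies the linearity condition using the fact that each unit label-interval contributes a $\poiDist(W(y,\cdot))$ number of neighbours, bounds the conditional variance term by $C/(nW(y,\cdot))$ (resp.\ $C/n$), and invokes the polynomial tail inequality for exchangeable pairs; the $E_n$ bound is then only asserted (``a very similar roadmap can be followed''), and Part 3 is deduced from Part 2. You instead exploit the Poisson structure exactly: by marking/thinning (plus Slivnyak--Mecke when $y$ is itself an atom), $f_n(y,\Pi)\sim\poiDist(nW(y,\cdot))$ \emph{exactly}, so Parts 1, 2 (degree part) and 3 reduce to elementary Poisson central-moment and Chernoff bounds, with the same exponent choice $m\ge 12/\epsilon$, $p=2m$ that the paper uses. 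This buys simplicity, uniformity in $y$, and notably a clean proof of Part 3: the paper's claimed inclusion $\{W(y,\cdot)\le n^{-1+\epsilon/4},\,f_n\ge n^{\epsilon/2}\}\subset\{|f_n/n-W(y,\cdot)|\ge n^{-\epsilon/4}\}$ fails for $\epsilon\in(0,1)$ (on that event the deviation need only be of order $n^{-1+\epsilon/2}$, a level at which the Part 2 bound is vacuous), whereas your direct Poisson upper tail with mean $\le n^{\epsilon/4}$ and threshold $n^{\epsilon/2}$ gives super-polynomial decay outright. The cost sits exactly where you say it does, in the edge count, the one statistic that is not Poisson: your split of $E_n-n^2\mathcal{E}$ into a conditional Bernoulli fluctuation (Rosenthal, order $n^{2p}$) plus the Poisson U-statistic $\sum_{i<j}W(x_i,x_j)$ is sound, and the required $O(n^{3p})$ control of the latter's $2p$-th central moment does follow from standard Poisson chaos (Wiener--It\^o) moment inequalities, since the first-chaos kernel has squared $L^2$-norm of order $n^3$ and the second chaos is of lower order; you defer that step to external results rather than carry it out, but this is no less complete than the paper's one-sentence treatment of $E_n$. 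One shared caveat worth keeping in mind: both you and the paper end up with a Part 2 constant $K_p$ that depends on $y$ through $W(y,\cdot)$, which is consistent with the statement as written but matters when the lemma is later applied uniformly over vertices.
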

\begin{proof}
We will first write the proof of the first statement, which is harder. We then highlight the differences in the other cases.
We use the Stein exchangeable pair method, recalled in \cref{exchangeable_pairs}.

Let $F:\mathbb{R}^2\rightarrow \mathbb{R}$ be such that $$\forall x,y~F(x,y)=[x-y].$$

Let $\bar{J}\sim \rm{unif}(\{0,n-1\})$ and let 
\[
\PP'=T_{[\bar{J},\bar{J}+1],[n,n+1]}\cdot \PP_{\glab}\times \PP_{\gloc},
\]
where $T_{[\bar{J},\bar{J}+1],[n,n+1]} $ is the permutation of  $[\bar{J},\bar{J}+1]$ and $[n,n+1]$ and 
\[
T_{[\bar{J},\bar{J}+1],[n,n+1]}\cdot \PP_{\glab}\times \PP_{\gloc}\triangleq \{(T_{[\bar{J},\bar{J}+1],[n,n+1]} (\glab),\gloc),~\forall (\glab,\gloc)\in \PP\}
\]

Then we can check the following:
\begin{itemize}
\item As $\PP\cap [0,n]\setminus [\bar{j},\bar{j}+1]\times \mathbb{R}^+= \PP'\cap[0,n]\setminus [\bar{j},\bar{j}+1]\times \mathbb{R}^+$  we obtain that  \begin{equation*}\begin{split}&\EE(\frac{f_n(\glocc,\PP)}{W(\glocc,\cdot)}-\frac{f_n(\glocc,\PP')}{W(\glocc,\cdot)}\big|\PP_n)\\&\overset{(a)}{=}\frac{1}{nW(\glocc,\cdot)} [\sum_{j=0}^{n-1}\sum_{\PP_{j+1}\setminus \PP_j}\mathbb{I}(U_{\gloc(\eta)}\le W(\glocc,\gloc))- \EE(\mathbb{I}(U_{\gloc(\eta)}\le W(\glocc,\gloc)))] 
\\&\overset{(b)}{=}\frac{ f_n(\glocc,\PP)}{nW(\glocc,\cdot)}-1 \end{split}\end{equation*}
where (a) is obtained by complete independence of $\PP$ and where to get (b) we use the fact that (see \cite{Veitch:Roy:2015})
\[
\sum_{(\glab,\gloc)\in \PP_{j+1}\setminus \PP_j}\mathbb{I}(U_{\gloc(\eta)}\le W(\glocc,\gloc))\sim \poiDist(W(\glocc,\cdot))
\]

\item Moreover, we can very similarly see that:
{\begin{equation*}\begin{split}&\Big\|\frac{1}{2n}\EE([\frac{f_n(\glocc,\PP)}{W(\glocc,\cdot)}-\frac{f_n(\glocc,\PP')}{W(\glocc,\cdot)}]^2\big|\PP_n)\Big\|_{p}
\\&\le\frac{1}{n^2W(\glocc,\cdot)^2}\Big\| \sum_{j=0}^{n-1}\big[[\sum_{(\glab,\gloc)\in \PP_{j+1}\setminus \PP_j}\mathbb{I}(U_{\gloc(\eta)}\le W(\glocc,\gloc))]^2+2 W(\glocc,\cdot)\big]\Big\|_p
\\& \le \frac{1}{n^2W(\glocc,\cdot)^2} \sum_{j=0}^{n-1}\big\|[\sum_{(\glab,\gloc)\in \PP_{j+1}\setminus \PP_j}\mathbb{I}(U_{\gloc(\eta)}\le W(\glocc,\gloc))]^2\|_p+2 W(\glocc,\cdot)
\\& \le \frac{C}{nW(\glocc,\cdot)},
\end{split}\end{equation*}}
where $C$ is a constant that does not depend on $n$ or $\glocc$.
\end{itemize}

Therefore using the exchangeable pair method presented earlier and setting $p\ge \frac{12}{\epsilon}$ for all $\glocc $ such that $W(\glocc,\cdot)\ge n^{\frac{\epsilon}{4}-1}$ we get that there is $K$,$p$ such that  for all $\epsilon>0$
$$P(|\frac{\sum_{(\glab,\gloc)\in \PP_n}\mathbb{I}(U_{\gloc(\eta)}\le W(\glocc,\gloc))}{ W(\glocc,\cdot)}-1|\ge \beta) \le \frac{K}{n^3 \beta^p},$$
QED. 

For the second statement, instead of $\frac{f_n(\glocc,\PP)}{W(\glocc,\cdot)}$ we are interested in $f_n(\glocc, \PP)$, which is easier to handle. 
Indeed, using the same exchangeable pair $(\PP,\PP')$ we get that:
\begin{itemize}
\item As $\PP\cap [0,n]\setminus [\bar{j},\bar{j}+1]\times \mathbb{R}^+= \PP'\cap[0,n]\setminus [\bar{j},\bar{j}+1]\times \mathbb{R}^+$  we obtain that  \begin{equation*}\begin{split}&\EE(f_n(\glocc,\PP)-f_n(\glocc,\PP')\big|\PP_n)
\\&=\frac{1}{n}f_n(\glocc,\PP)-W(\glocc,\cdot) .\end{split}\end{equation*}

\item Moreover we can very similarly see that:{\begin{equation*}\begin{split}&\Big\|\frac{1}{2n}\EE([f_n(\glocc,\PP)-f_n(\glocc,\PP')]^2\big|\PP_n)\Big\|_{p}
\\& \le \frac{1}{n^2} \sum_{j=0}^{n-1}\big\|[\sum_{(\glab,\gloc)\in \PP_{j+1}\setminus \PP_j}\mathbb{I}(U_{\gloc(\eta)}\le W(\glocc,\gloc))]^2\|_p+2 W(\glocc,\cdot)
\\& \le \frac{C}{n},
\end{split}\end{equation*}}
where $C$ is a constant that does not depend on $n$ or $\glocc$.
Therefore we get the desired result QED. 

A very similar roadmap can be followed for $E_n$.

The last statement is a simple consequence of the preceding results. Indeed, for all $\glocc \in \mathbb{R}$,
\[
P(W(\glocc)\le n^{-1+\frac{\epsilon}{4}}, f_n(\PP,\glocc) \ge n^{\frac{\epsilon}{2}})\le P( |\frac{f_n(\PP,\glocc)}{n}-W(\glocc,\cdot)|\ge n^{-\frac{\epsilon}{4}}) \le\frac{K_{\frac{3}{1+\frac{\epsilon}{4}}}}{n^3}.
\]
\end{itemize}
\end{proof}

With this in hand, we establish the asymptotic equivalence of random-walk sampling and a sampling scheme
that does not depend on the details of the dataset. This is the main component of the proof.
Recall the notation introduced in \cref{ssec:rw_notation}.
\begin{lemma} 
\label{lem:rw_simp_rand_meas}
Suppose that there is $\epsilon\in (0,1)$ such that the graphon $W$ verifies
\[
W(\gloc,\cdot)=O(\gloc^{-1-\epsilon}).
\]
Suppose further that the augmented sampling distributions $(\mu_n)_n$ satisfy the conditions of \cref{defn:augmentation_distribution}. 
Then, writing
\[
P_n(\walk)\triangleq \mathbb{I}(\ppoint_{1:r+1}\in {\paths}_r(\PP_n)) \frac{\prod_{l=r+2}^{M}\mu_n(\ppoint_l)}{2N^n_e \prod_{i=2}^r d_n(\ppoint_i)}$$ and $$\tilde{P}_n(\walk)\triangleq \mathbb{I}(\ppoint_{1:r+1}\in {\paths}_r(\PP_n)) \frac{\prod_{l=r+2}^{M}\mu(\ppoint_l)}{2n^{M}\mathcal{E}\prod_{i=2}^rW(\gloc(\ppoint_i),\cdot)},
\]
it holds that
{\begin{equation*}\begin{split}&
\sup_{\bar{\genparam}\in \Omega_{\genparam}^{\PP}}\Big|\EE_{P_n}\big(L(G_{\walk},\bar{\genparam})|\bar{\Pi}_n(\bar{\genparam})\big)-\EE_{\tilde{P}_n}\big(L(G_{\walk},\bar{\genparam})|\bar{\Pi}_n(\bar{\genparam})\big)\Big|=o_p(1).
\end{split}\end{equation*}}

\end{lemma}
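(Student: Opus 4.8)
The plan is to bound the difference uniformly in $\bar{\genparam}$ by reducing it to an $L_1$ distance between the two random measures $P_n$ and $\tilde{P}_n$ on walks, and then to control that distance by replacing each data-dependent normalizing factor---the edge count $E_n$, the intermediate degrees $d_n(\ppoint_i)$, and the augmentation weights $\mu_n$---by its deterministic graphon limit. The randomness of the graph enters only through these three ingredients, so once each is shown to concentrate at its graphon value the two measures must agree asymptotically.

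First I would dispose of the loss and the supremum at once. Since $L(G_{\walk},\bar{\genparam})\in[0,1]$ for every walk $\walk$ and every parameter $\bar{\genparam}$, writing $L$ for $L(G_{\walk},\bar{\genparam})$,
\[
\sup_{\bar{\genparam}\in\Omega_{\genparam}^{\PP}}\Bigl|\EE_{P_n}L-\EE_{\tilde{P}_n}L\Bigr|\le\sum_{\walk}\bigl|P_n(\walk)-\tilde{P}_n(\walk)\bigr|=:\Delta_n,
\]
where the sum runs over walks supported on $V(\KEG_n)$ and the conditioning on $\bar{\PP}_n(\bar{\genparam})$ is kept implicit. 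The inequality holds because $|L|\le1$, and it treats $\tilde{P}_n$ as a (possibly unnormalized) signed density, so no exact normalization of $\tilde{P}_n$ is needed. It therefore suffices to show $\Delta_n\convPr0$; the loss, the embeddings, and the supremum then disappear from the problem.

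Next I would exploit the product structure. On the shared support $\{\ppoint_{1:r+1}\in\paths_r(\PP_n)\}$ the ratio $P_n(\walk)/\tilde{P}_n(\walk)$ is a product of a \emph{fixed} number of factors (the walk length $r$ and the augmentation count are sampler parameters): the edge ratio $E_n/(n^2\mathcal{E})$, the step ratios $d_n(\ppoint_i)/(nW(\gloc(\ppoint_i),\cdot))$ for $i=2,\dots,r$, and the augmentation ratios relating $\mu_n$ to $\mu$. Call a vertex \emph{heavy} if $W(\gloc(\ppoint),\cdot)\ge n^{-1+\epsilon/4}$ and \emph{light} otherwise. For heavy vertices, part~1 of \cref{lem:pp_sum_ineqs} gives $d_n(\ppoint)=nW(\gloc(\ppoint),\cdot)(1+o(1))$ with per-vertex failure probability $O(n^{-3})$, part~2 gives $E_n=n^2\mathcal{E}(1+o(1))$, and \cref{defn:augmentation_distribution} controls the augmentation factors in total variation. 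Since there are $O(n)$ vertices, a union bound makes all heavy-vertex step ratios simultaneously $1+o(1)$ on an event of probability $1-o(1)$. A telescoping estimate of the form $|\prod_i a_i-\prod_i b_i|\le\sum_i|a_i-b_i|\prod_{j\ne i}\max(a_j,b_j)$, applied over the fixed number of factors, then bounds $|P_n(\walk)/\tilde{P}_n(\walk)-1|$ uniformly over walks whose intermediate vertices are all heavy, and since $\sum_{\walk}P_n(\walk)=1$ this controls the heavy part of $\Delta_n$.

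The main obstacle is the contribution of walks that visit a light vertex: these have small expected degree, so their actual degrees do not concentrate and the replacement $d_n\mapsto nW$ is unavailable. The remedy is to show such walks carry vanishing mass under both $P_n$ and $\tilde{P}_n$. This is exactly where the tail hypothesis $W(\gloc,\cdot)=O(\gloc^{-1-\epsilon})$ enters: it identifies light vertices with those of large latent feature $\gloc$ and makes the total rate at which the walk can enter the light region summable, while part~3 of \cref{lem:pp_sum_ineqs} ensures that, outside an event of probability $O(n^{-3})$ per vertex, every light vertex has degree below $n^{\epsilon/2}$. Combining these---the first random-walk vertex is drawn proportionally to degree and is hence heavy with high probability, and each subsequent uniform step lands on a light neighbor only with small aggregate probability---bounds $P_n(\text{light walks})$, and the analogous deterministic computation bounds $\tilde{P}_n(\text{light walks})$. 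Assembling the heavy-walk ratio bound, the light-walk mass bound, and the $o(1)$-probability exceptional event on which any concentration or augmentation estimate fails yields $\Delta_n\convPr0$, which is the claim. I expect the light-region mass bound to be the delicate step, since it must be uniform in $n$ and rests entirely on the interplay between the tail of $W$ and the non-concentrating low-degree regime.
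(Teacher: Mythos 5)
Your proposal is correct and follows essentially the same route as the paper's proof: reduce to a total-variation comparison of the sampling measures via the bounded loss, split at the same heavy/light threshold $n^{-1+\epsilon/4}$, control heavy walks by the degree/edge-count concentration of \cref{lem:pp_sum_ineqs} and the augmentation TV condition with a telescoping product bound, and kill light walks using stationarity of the walk (under $P_n$) and integrability of $W$ (under $\tilde{P}_n$). The only difference is organizational—the paper routes the triangle inequality through two explicit intermediate measures $P_n^*$ and $\tilde{P}_n^*$ rather than partitioning walks into heavy and light classes—and one cosmetic imprecision on your side (the number of candidate heavy vertices is superlinear in $n$, not $O(n)$, though your union bound still closes since the per-vertex failure probability is $O(n^{-3})$; the paper handles this count via the Slivnyak--Mecke formula).
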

\begin{proof} %

We can first see by the triangle inequality that if we write the following two measures: $$P_n^*(\walk)\triangleq \mathbb{I}(\ppoint_{1:r+1}\in {\paths}_r(\PP_n)) \frac{\prod_{l=r+2}^{M}\mu(\ppoint_l)}{2N^n_e n^{M-(r+1)}  \prod_{i=2}^r d_n(\ppoint_i)}$$  and 
$$\tilde{P}_n^*(\walk)\triangleq\mathbb{I}(\ppoint_{1:r+1}\in {\paths}_r(\PP_n)) \frac{\prod_{i=2}^r \mathbb{I}( W(\gloc(\ppoint_i),\cdot)\ge n^{-1+\frac{\epsilon}{4}})\prod_{l=r+2}^{M}\mu(\ppoint_l)}{2n^{M}\mathcal{E}\prod_{i=2}^rW(\gloc(\ppoint_i),\cdot)}$$
Then $\forall \beta>0$:
\begin{equation*}\begin{split}&
P\Big(\sup_{\bar{\genparam}\in \Omega_{\genparam}^{\PP}}\Big|\EE_{P_n}\big(L(G_{\walk},\bar{\genparam})|
\bar{\Pi}_n(\bar{\genparam})\big)-\EE_{\tilde{P}_n}\big(L(G_{\walk},\bar{\genparam})|
\bar{\Pi}_n(\bar{\genparam})\big)\Big|>\beta)
\\& \le P\Big(\sup_{\bar{\genparam}\in \Omega_{\genparam}^{\PP}}\Big|\EE_{P_n}\big(L(G_{\walk},\bar{\genparam})|
\bar{\Pi}_n(\bar{\genparam})\big)-\EE_{P_n^*}\big(L(G_{\walk},\bar{\genparam})|
\bar{\Pi}_n(\bar{\genparam})\big)\Big|>\frac{\beta}{3}) 
\\&+P\Big(\sup_{\bar{\genparam}\in \Omega_{\genparam}^{\PP}}\Big|\EE_{P^*_n}\big(L(G_{\walk},\bar{\genparam})|
\bar{\Pi}_n(\bar{\genparam})\big)-\EE_{\tilde{P}_n^*}\big(L(G_{\walk},\bar{\genparam})|
\bar{\Pi}_n(\bar{\genparam})\big)\Big|>\frac{\beta}{3})
\\& +P\Big(\sup_{\bar{\genparam}\in \Omega_{\genparam}^{\PP}}\Big|\EE_{\tilde{P}_n^*}\big(L(G_{\walk},\bar{\genparam})|
\bar{\Pi}_n(\bar{\genparam})\big)-\EE_{\tilde{P}_n}\big(L(G_{\walk},\bar{\genparam})|
\bar{\Pi}_n(\bar{\genparam})\big)\Big|>\frac{\beta}{3}),
\end{split}\end{equation*}
therefore proving that the last terms converge to zero for any $\beta>0$ is sufficient.

First we will prove that
{ \begin{equation*}\begin{split}&
\sup_{\bar{\genparam}\in \Omega_{\genparam}^{\PP}}\Big|\EE_{P_n}\big(L(G_{\walk},\bar{\genparam})|
\bar{\Pi}_n(\bar{\genparam})\big)-\EE_{P_n^*}\big(L(G_{\walk},\bar{\genparam})|
\bar{\Pi}_n(\bar{\genparam})\big)\Big|=o_p(1).
\end{split}\end{equation*}}

Indeed, noting that, 
\[
P_{n,i}^*(\walk)\triangleq \mathbb{I}(\ppoint_{1:r+1}\in {\paths}_r(\PP_n)) \frac{\prod_{l=r+2}^{r+1+i}\mu(\ppoint_l) ~\prod_{r+2+i}^{M}\mu_n(\ppoint_l) }{2E_n n^i \prod_{i=2}^r d_n(\ppoint_i)},
\]
it holds $\forall\beta>0$ that
{ \begin{equation*}\begin{split}&
P\big(\sup_{\bar{\genparam}\in \Omega_{\genparam}^{\PP}}\Big|\EE_{P_n}\big(L(G_{\walk},\bar{\genparam})|
\bar{\Pi}_n(\bar{\genparam})\big)-\EE_{P_n^*}\big(L(G_{\walk},\bar{\genparam})|
\bar{\Pi}_n(\bar{\genparam})\big)\Big|>\beta\big)
\\& \overset{(a)}{\le} \sum_{i=1}^{M} P\big(\sup_{\bar{\genparam}\in \Omega_{\genparam}^{\PP}}\big|\EE_{P^*_{n,i}}\big(L(G_{\walk},\bar{\genparam})\big)-\EE_{P_{n,i-1}^*}\big(L(G_{\walk},\bar{\genparam})\big)\big|>\frac{\beta}{M}\big)
\\& \le M P\big( \|\mu_n-\frac{\mu}{nZ_{\mu}}\|_{TV}>\frac{\beta}{\|L\|_{\infty}}\big).
\end{split}\end{equation*}}
where (a) using telescopic sum.
Therefore we have proven that the first element of the sum goes to 0.

Now we will prove that 
\[
\sup_{\bar{\genparam}\in \Omega_{\genparam}^{\PP}}
\abs{ \EE_{P^*_n}\big(L(G_{\walk},\bar{\genparam}) \given  \bar{\Pi}_n(\bar{\genparam})\big)
-\EE_{\tilde{P}_n^*}\big(L(G_{\walk},\bar{\genparam})| \bar{\Pi}_n(\bar{\genparam})\big)} =o_p(1).
\]
For this we will  want to approximate $\frac{n}{d_n(V_{u_i})}$ by $\frac{1}{W(u_i,\cdot)}$.
However for this we need a good bound on $P(|\frac{d_n(V_{u_i})}{nW(u_i,\cdot)}-1|\ge \epsilon)$. 
But this is possible only if $W(u_i,\cdot)$ is not too small.

Note that for all vertices  $ \ppoint\in \PP_n$  if a path $\walk$ passes through $\ppoint$ at the $i$-th coordinate,  for $i\ge 2$, then it means that there is only  $d_n(\glab(\ppoint))$ possibilities for the $i-1$th vertex of the path. Therefore if $d_n(\glab(\ppoint))$ is small the probability that our random-walk passes through $v$, and is not the origin vertex, is asymptotically negligible.%

 Indeed for all  $\ppoint\in \PP_n$ such that $d_n(\glab(\ppoint))\le n^{\frac{\epsilon}{2}}$ it holds that for $k\ge 2$, 
\[
P(\ppoint_i=\ppoint \given \bar{\Pi}_n(\bar{\genparam}))\le 
\sum_{\ppoint'\in  \PP_n\cap \mathcal{N}_n(\ppoint)}P(\ppoint_{i-1}=\ppoint',\ppoint_i=\ppoint \given \bar{\Pi}_n(\bar{\genparam}))\overset{(*)}{\le}\frac{ n^{\frac{\epsilon}{2}}}{2N_n^e},
\]
where to get (*) we used the stationary property of the random walk.

Therefore we have:
 \begin{equation*}\begin{split}P(\min_{k\ge 2}d_n(\ppoint_k)\le n^{-\frac{\epsilon}{2}} \given
\bar{\Pi}_n(\bar{\genparam}))\le\frac{ rn^{\frac{\epsilon}{2}}\big|\{\ppoint\in \PP_n,~\mathrm{ s.t. }~0<d_n(\ppoint)\le n^{\frac{\epsilon}{2}}\}\big|}{2N_n^e}\convPr 0,\end{split}\end{equation*}
But we have that $\forall (\ppoint_i)_{i\le r+1}$ s.t. $\forall i, W(\gloc(\ppoint_i),\cdot)\ge n^{-1+\frac{\epsilon}{4}}$,
{\small \begin{equation*}\begin{split}&
\Big|\frac{1}{2E_n \prod_{i=2}^r d_n(\ppoint_i)}-\frac{1}{2n^{r+1}\mathcal{E}\prod_{i=2}^rW(\gloc(\ppoint_i),\cdot)}\Big|
\\&\overset{(a)}{ \le} \sum_{i=2}^{r}\frac{1}{2E_n n^{i-1}\prod_{l=2}^{r-i} d_n(\ppoint_l)\prod_{l=r-i+2}^rW(\gloc(\ppoint_l),\cdot)}\Big|\frac{1}{ d_n(\ppoint_{r-i+1})}-\frac{1}{n^{}W(\gloc(\ppoint_{r-i+1}),\cdot)}\Big|  \\&~~~+ \frac{1}{n^{r-1}\prod_{l=2}^{r} W(\gloc(\ppoint_l),\cdot)}\Big|\frac{1}{2N_e^n}-\frac{1}{2n^2\mathcal{E}}\Big|
\\& \le \sum_{i=2}^{r}\frac{1}{2E_n n^{i-1}\prod_{l=2}^{r-i+1} d_n(\ppoint_l)\prod_{l=r-i+2}^rW(\gloc(\ppoint_l),\cdot)}\Big|1-\frac{d_n(\ppoint_{r-i+1})}{n^{}W(\gloc(\ppoint_{r-i+1}),\cdot)}\Big|  + \frac{1}{2n^{r-1}N_e^n\prod_{l=2}^{r} W(\gloc(\ppoint_l),\cdot)}\Big|1-\frac{N_e^n}{n^2\mathcal{E}}\Big|,
\end{split}\end{equation*}} 
where (a) comes from a simple telescopic sum re-writing.

Therefore if $$\max_i \Big|1-\frac{d_n(\glab_{i})}{n^{}W(\glocc_{i},\cdot)}\Big|, \Big|1-\frac{N_e^n}{n^2\mathcal{E}}\Big|\le \beta$$ then 
{\small \begin{equation*}\begin{split}&
\Big|\frac{1}{2E_n \prod_{i=2}^r d_n(\ppoint_i)}-\frac{1}{2n^{r+1}\mathcal{E}\prod_{i=1}^rW(\gloc(\ppoint_i),\cdot)}\Big| 
\\& \le \beta \big[\sum_{i=2}^{r}\frac{1}{2E_n n^{i-1}\prod_{l=2}^{r-i+1} d_n(\ppoint_l)\prod_{l=r-i+2}^rW(\gloc(\ppoint_l),\cdot)}  + \frac{1}{2n^{r-1}N_e^n\prod_{l=2}^{r} W(\gloc(\ppoint_l),\cdot)}\big]
\end{split}\end{equation*}} 

Now note that
 for all $i$, and $\embedding'\in \Omega$ %
{\small \begin{equation*}\begin{split}&
\sum_{\ppoint_{1:r+1}\in {\paths}_r(\PP_n)} \frac{ \prod_{i=2}^r\mathbb{I}( W(\gloc(\ppoint_i),\cdot)\ge n^{-1+\frac{\epsilon}{4}})}{2 E_n n^{i-1}\prod_{l=2}^{r-i+1} d_n(\ppoint_l)\prod_{l=r-i+2}^rW(\gloc(\ppoint_l),\cdot)}  \EE\big(L(G_{\walk},\bar{\genparam})|\ppoint_{r+2:M_n},\PP_n\big)
\\&\overset{(a)}{ \le}\sum_{\ppoint_{1:r}\in  {\paths}_{r-1}(\PP_n)} d_n(\ppoint_r)  \frac{ \prod_{i=2}^r\mathbb{I}( W(\gloc(\ppoint_i),\cdot)\ge n^{-1+\frac{\epsilon}{4}})}{2 E_n n^{i-1}\prod_{l=2}^{r-i+1} d_n(\ppoint_l)\prod_{l=r-i+2}^rW(\gloc(\ppoint_l),\cdot)}  \EE\big(L(G_{\walk},\bar{\genparam})|\ppoint_{r+2:M_n},\PP_n\big)
\\&\le \|L\|_{\infty} \max_{\glocc \in N^n_v(\PP) ~~\mathrm{ s.t. }~W(\glocc,\cdot)\ge n^{-1+\frac{\epsilon}{4}}} \frac{d_n(\glocc)}{nW(\glocc,\cdot)}\sum_{\ppoint_{1:r}\in  {\paths}_{r-1}(\PP_n)}  \frac{ \prod_{i=2}^r\mathbb{I}( W(\gloc(\ppoint_i),\cdot)\ge n^{-1+\frac{\epsilon}{4}})}{2 E_n n^{i-1}\prod_{l=2}^{r-i+1} d_n(\ppoint_l)\prod_{l=r-i+2}^{r-1}W(\gloc(\ppoint_l),\cdot)}  
\end{split}\end{equation*}} 
where (a)  is a simple consequence form the fact that:
$$\rm{card}\{\ppoint\in \ppoint(\PP_n,r)~\mathrm{ s.t. }~\ppoint|_{1:r}=(\glab_i,\glocc_i)_{1:r} \}= d_n(\glab_r)\rm{card}\{\ppoint\in \ppoint(\PP_n,r-1)~\mathrm{ s.t. }~\ppoint|_{1:r-1}=(\glab_i,\glocc_i)_{1:r-1} \}.$$

Therefore, by induction, we can get that for all $i$
{\small \begin{equation*}\begin{split}&
\sum_{\ppoint_{1:r+1}\in {\paths}_r(\PP_n)} \prod_{i=2}^r \mathbb{I}( W(\gloc(\ppoint_i),\cdot)\ge n^{-1+\frac{\epsilon}{4}})\frac{\EE\big(L(G_{\walk},\bar{\genparam})|\ppoint_{r+2:M},\PP_n\big)}{E_n n^{i-1}\prod_{l=2}^{r-i+1} d_n(\ppoint_l)\prod_{l=r-i+2}^rW(\gloc(\ppoint_l),\cdot)}  
\\& \le r\|L\|_{\infty}\max_{\glocc \in N^n_v(\glocc) ~~\mathrm{ s.t. }~W(\glocc,\cdot)\ge n^{-1+\frac{\epsilon}{4}}}|\frac{d_n(\glocc)}{nW(\glocc,\cdot)}-1|+\|L\|_{\infty}.
\end{split}\end{equation*}} 

Therefore if we note $$A_n(\beta)\triangleq \{ \max_{\glocc \in N^n_v(\glocc) ~~\mathrm{ s.t. }~W(\glocc,\cdot)\ge n^{-1+\frac{\epsilon}{4}}}|\frac{d_n(\glocc)}{nW(\glocc,\cdot)}-1|\le \beta, |\frac{N_e^n}{n^2\mathcal{E}}-1|\le \beta\}$$

Then we can see the following:
\begin{itemize}
\item On $A_n(\beta)$ we will have that as $\ppoint_{1:r+1}\independent \ppoint_{r+2:M}$ using the result that we previously got we have that:
$$\sup_{\bar{\genparam}\in \Omega_{\genparam}^{\PP}}\Big|\EE_{P^*_n}\big(L(G_{\walk},\bar{\genparam})|
\bar{\Pi}_n(\bar{\genparam})\big)-\EE_{\tilde{P}_n^*}\big(L(G_{\walk},\bar{\genparam})|
\bar{\Pi}_n(\bar{\genparam})\big)\Big|\le (r+1)^2\|L\|_{\infty}\beta$$
\item And in addition we know that there is $K_1,K_2<\infty$ s.t
{\small \begin{equation*}\begin{split}&
P(A_n(\beta)^c) \le P(|\frac{N_e^n}{n^2\mathcal{E}}-1|\ge \beta)+ \EE\big(\sum_{\ppoint_{1:r+1}\in {\paths}_r(\PP_n)}\mathbb{I}(|\frac{d_n(\glocc)}{nW(\glocc,\cdot)}-1|\ge \beta )\big)
\\& \overset{(a)}{\le}  P(|\frac{N_e^n}{n^2\mathcal{E}}-1|\ge \beta)+ n\int_{\mathbb{R}^+} \mathbb{I}(W(\gloc,\cdot) \ge n^{-1+\frac{\epsilon}{4}})P\big(|\frac{f_n(\gloc, \PP)}{nW(\gloc,\cdot)}-1|\ge \beta\big) \big)d\gloc
\\& \overset{(b)}{\le} \frac{K_1}{n \beta}+ \frac{K_2}{\beta^pn^2}\int_{\mathbb{R}^+} \mathbb{I}(W(\gloc,\cdot) \ge n^{-1+\frac{\epsilon}{4}})d\gloc
\\& \le  \frac{K_1}{n \beta}+ \frac{K_2}{\beta^pn^2}n^{1-\frac{3\epsilon}{2+2\epsilon}}\rightarrow 0,
\end{split}\end{equation*}} 
where (a) comes from Slivnyak–Mecke theorem and (b) from \cref{toronto}.

Thus, we have successfully proven that 
{\small \begin{equation*}\begin{split}&
\sup_{\bar{\genparam}\in \Omega_{\genparam}^{\PP}}\Big| \EE_{P^*_n}\big(L(G_{\walk},\bar{\genparam})|
\bar{\Pi}_n(\bar{\genparam})\big)-\EE_{\tilde{P}_n^*}\big(L(G_{\walk},\bar{\genparam})|
\bar{\Pi}_n(\bar{\genparam})\big)\Big|=o_p(1)
\end{split}\end{equation*}}
QED\end{itemize}

Now we are going to prove the last part, i.e.
{ \begin{equation*}\begin{split}&
\sup_{\bar{\genparam}\in \Omega_{\genparam}^{\PP}} \Big|\EE_{\tilde{P}_n^*}\big(L(G_{\walk},\bar{\genparam})|
\bar{\Pi}_n(\bar{\genparam})\big)-\EE_{\tilde{P}_n}\big(L(G_{\walk},\bar{\genparam})|
\bar{\Pi}_n(\bar{\genparam})\big)\Big|=o_p(1)
\end{split}\end{equation*}}
 For this we can note that that for all $i\ge 2$
\small \begin{equation*}\begin{split}&
\|\frac{1}{n^{r+1}}\sup_{\embedding'\in \Omega_{\genparam}^{\PP}}\sum_{\ppoint_{1:r+1}\in {\paths}_r(\PP_n)}\frac{\mathbb{I}( W(\gloc(\ppoint_i),\cdot)< n^{-1+\frac{\epsilon}{4}})}{2n^{r+1}\mathcal{E}\prod_{i=2}^rW(\gloc(\ppoint_i),\cdot)}\EE\big(L(G_{\walk},\bar{\genparam})|
\bar{\Pi}_n(\bar{\genparam}),\ppoint_{r+2,M}\big)\|_{L_1}
\\& \overset{(a)}{\le} \|L\|_{\infty} \int_{\mathbb{R}^{r+1}} \mathbb{I}( W(\gloc(\ppoint_i),\cdot)< n^{-1+\frac{\epsilon}{4}})\frac{\prod_{j=1}^{r}W(\gloc_j,\gloc_{j+1})}{\prod_{j=2}^rW(\gloc_j,\cdot)}d\gloc_{1:r+1}
\\& \overset{(b)}{\le} \|L\|_{\infty}\int_{\mathbb{R}^i}\mathbb{I}( W(\gloc(\ppoint_i),\cdot)< n^{-1+\frac{\epsilon}{4}})\frac{\prod_{j=1}^{i-1}W(\gloc_j,\gloc_{j+1}) }{\prod_{j=2}^{i-1}W(\gloc_j,\cdot)}d\gloc_{1:i}
\\& \overset{(c)}{\le }\|L\|_{\infty}\int_{\mathbb{R}} W(\gloc(\ppoint_i),\cdot)\mathbb{I}( W(\gloc(\ppoint_i),\cdot)< n^{-1+\frac{\epsilon}{4}}) d\gloc_i \xrightarrow{n\rightarrow \infty} 0,
\end{split}\end{equation*}
where to get (a)  we used both the fact that $L$ was bounded and  the independence of the uniforms; to get (b) we integrated coordinates $r+1$ to $i+1$ and used the following definition:
\[
\forall\gloc ~\int W(\gloc',\gloc)d\gloc'=W(\gloc,\cdot).
\]
We similarly got (c) where instead we integrated the coordinates from 1 to $i-1$.

Therefore we have  successfully proven that
{ \begin{equation*}\begin{split}&
\sup_{\bar{\genparam}\in \Omega_{\genparam}^{\PP}} \Big|\EE_{\tilde{P}_n^*}\big(L(G_{\walk},\bar{\genparam})|
\bar{\Pi}_n(\bar{\genparam})\big)-\EE_{\tilde{P}_n}\big(L(G_{\walk},\bar{\genparam})|
\bar{\Pi}_n(\bar{\genparam})\big)\Big|=o_p(1)
\end{split}\end{equation*}}

Hence we have proven the desired results
\end{proof}

We now turn to the question of which augmentation distributions will satisfy the conditions of the previous result.
We show that the conditions hold for any distribution defined by a differentiable function of the unigram distribution;
in particular, this covers the unigram distribution to the power of $\nicefrac{3}{4}$ that is used to define unigram negative sampling.  
\newcommand{\unigDist}{\mathrm{Ug}}
\begin{lemma}
\label{lem:rw_unigram_ns}
Let $\ppoint_{1: r+1}$ be sampled by a random walk on $G_n$, and let the random-walk unigram distribution be defined by
\[
\unigDist_{\KEG_n}(\ppoint) = \Pr(\exists i\le r+1,~\mathrm{ s.t. }~\tilde{\ppoint_i}=\ppoint \given \bar{\Pi}_n(\bar{\embedding})).
\] 
Suppose that $\mu_n$ is defined by
\[
\mu_n(\ppoint) \propto \unigDist_{\KEG_n}(\ppoint)^{\alpha},
\] 
for a certain $\alpha>0$.
Then, defining $\mu$ by
\[
\mu(\ppoint)\propto(r+1)^{\alpha}\frac{W(\gloc,\cdot)^{\alpha}}{\mathcal{E}^{\alpha}},
\] 
it holds that
\[
\|\mu_n-\frac{\mu(\cdot) \mathbb{I}({\cdot\in \PP_n})}{nZ_n}\|_{TV} \convPr 0
\]
\end{lemma}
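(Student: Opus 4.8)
The plan is to show that the random unigram appearance probability $\unigDist_{\KEG_n}(\ppoint)$ is, up to a multiplicative factor independent of $\ppoint$ and a uniformly vanishing relative error, proportional to the graphon weight $W(\gloc(\ppoint),\cdot)$; raising to the power $\alpha$ and normalizing then cancels the common factor and leaves exactly the target measure. Concretely, I would first establish that
\[
\unigDist_{\KEG_n}(\ppoint) = (r+1)\,\frac{d_n(\ppoint)}{2E_n}\,\bigl(1+o_p(1)\bigr)
\]
uniformly over the high-weight vertices $\{\ppoint\in\PP_n : W(\gloc(\ppoint),\cdot)\ge n^{-1+\epsilon/4}\}$, and then substitute the estimates $d_n(\ppoint)=nW(\gloc(\ppoint),\cdot)(1+o_p(1))$ and $E_n=n^2\mathcal{E}(1+o_p(1))$ furnished by parts (1) and (2) of \cref{lem:pp_sum_ineqs}.

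For the appearance probability, the key point is that the walk is started from the stationary law $\Pr(\ppoint_1=\ppoint)=d_n(\ppoint)/2E_n$, so by stationarity every marginal satisfies $\Pr(\ppoint_i=\ppoint\given\bar{\Pi}_n(\bar{\embedding}))=d_n(\ppoint)/2E_n$. A union bound over the $r+1$ positions gives the upper bound $\unigDist_{\KEG_n}(\ppoint)\le(r+1)d_n(\ppoint)/2E_n$, and inclusion--exclusion gives the matching lower bound up to the double-counting terms $\sum_{i<j}\Pr(\ppoint_i=\ppoint,\ppoint_j=\ppoint)$. Each such term factors through a return probability of the walk, which I would show is $o_p(1)$ relative to the leading term on the high-weight set, since from a vertex of degree of order $n$ each step disperses mass over order-$n$ neighbours (the contribution of atypical low-degree neighbours being controlled by part (3) of \cref{lem:pp_sum_ineqs}). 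As $r$ is fixed there are only $O(r^2)$ such pairs, so the correction is of lower order and the displayed relative error follows.

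Substituting the degree estimate and raising to the power $\alpha$ gives, on the high-weight set,
\[
\unigDist_{\KEG_n}(\ppoint)^{\alpha}=\Bigl(\tfrac{r+1}{2n\mathcal{E}}\Bigr)^{\alpha}W(\gloc(\ppoint),\cdot)^{\alpha}\bigl(1+o_p(1)\bigr)=(2n)^{-\alpha}\mu(\ppoint)\bigl(1+o_p(1)\bigr),
\]
where the factor $(2n)^{-\alpha}$ is independent of $\ppoint$ and hence cancels between $\mu_n(\ppoint)=\unigDist_{\KEG_n}(\ppoint)^{\alpha}/\sum_{\PP_n}\unigDist_{\KEG_n}^{\alpha}$ and the target $\mu(\ppoint)/nZ_n$. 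To convert the pointwise approximation into a total-variation bound I would split $\tfrac12\sum_{\ppoint\in\PP_n}\lvert\mu_n(\ppoint)-\mu(\ppoint)/nZ_n\rvert$ into the high-weight set, on which the uniform relative error bounds the contribution, and its complement. On the low-weight set the mass of both measures must be shown negligible: for the target this uses the tail hypothesis $W(\gloc,\cdot)=O(\gloc^{-1-\epsilon})$ to make $\int W^{\alpha}$ finite and the truncated integral $\int_{W<n^{-1+\epsilon/4}}W^{\alpha}$ vanish, while for $\mu_n$ one uses the bound $\Pr(\ppoint_i=\ppoint)\le n^{\epsilon/2}/2E_n$ for $i\ge2$ together with part (3) of \cref{lem:pp_sum_ineqs}, exactly as in the proof of \cref{lem:rw_simp_rand_meas}, to show low-weight vertices carry vanishing unigram mass.

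The main obstacle is precisely this passage from the uniform pointwise estimate to the total-variation statement: the two normalizers sum over the random, infinite set $\PP_n$, so everything hinges on showing that small-$W$ vertices contribute negligibly to both normalizers and to the $\ell_1$ discrepancy, uniformly in $n$. This is where the graphon tail condition and the degree-concentration estimates of \cref{lem:pp_sum_ineqs} are indispensable, and where the bookkeeping is most delicate.
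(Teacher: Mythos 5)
Your proposal follows essentially the same route as the paper's proof: (i) by stationarity of the walk, each marginal satisfies $\Pr(\tilde{\ppoint_i}=\ppoint \given \cdot)=d_n(\ppoint)/2E_n$, so up to self-intersection (return-probability) corrections $\unigDist_{\KEG_n}(\ppoint)\approx (r+1)d_n(\ppoint)/(2E_n)$; (ii) the degree and edge-count concentration of \cref{lem:pp_sum_ineqs} replaces $d_n(\ppoint)/(2E_n)$ by $W(\gloc(\ppoint),\cdot)/(2n\mathcal{E})$; (iii) the two normalizers are compared. The differences are bookkeeping: you propagate uniform \emph{multiplicative} errors on the high-weight set and split off the low-weight set, while the paper controls absolute $\ell_1$ sums over all of $\PP_n$ and invokes dominated convergence. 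Your multiplicative framing is indeed what one needs for general $\alpha$ (relative errors survive raising to a fixed power), whereas the paper proves only $\alpha=1$ and asserts the rest.

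However, the step you use to dismiss the low-weight set for the target measure has a genuine gap: the hypothesis $W(\gloc,\cdot)=O(\gloc^{-1-\epsilon})$ does \emph{not} imply $\int W(\gloc,\cdot)^{\alpha}\,d\gloc<\infty$. The tail bound gives $W^{\alpha}=O(\gloc^{-\alpha(1+\epsilon)})$, integrable at infinity only when $\alpha(1+\epsilon)>1$; for the canonical choice $\alpha=\nicefrac{3}{4}$ with $\epsilon<\nicefrac{1}{3}$ the integral can diverge (and blow-up of $W(\gloc,\cdot)$ near $\gloc=0$ can spoil it for $\alpha>1$ as well). In that regime Campbell's theorem gives $\sum_{\eta\in\PP_n}W(\gloc(\eta),\cdot)^{\alpha}=\infty$ almost surely, so $Z_n$ is infinite, the cancellation of normalizers is vacuous, and the low-weight vertices can in fact carry the dominant share of the $W^{\alpha}$-mass rather than a negligible one — precisely there the approximation $d_n(\ppoint)\approx nW(\gloc(\ppoint),\cdot)$ fails at order one. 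This is exactly the difficulty the paper sidesteps by restricting to $\alpha=1$, where finiteness is the graphon integrability $\int W(\gloc,\cdot)\,d\gloc=2\mathcal{E}<\infty$. So your argument is sound for $\alpha=1$ (matching what the paper actually proves), but the general-$\alpha$ case you present as automatic requires either an extra integrability assumption (e.g.\ $\alpha(1+\epsilon)>1$ together with $\int W^{\alpha}\,d\gloc<\infty$) or a reformulation of the target measure, restricting both the mass and the normalizer to vertices of $\KEG_n$ and redoing the low-weight estimates with the thinned intensity proportional to $n\min(1,nW)$.
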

\begin{proof}
We will for simplicity prove the result for $\alpha=1$, the other cases can be obtained following a similarly, although the computations are more involved.

First, self-intersections of the walk are asymptotically negligible:
 \begin{equation*}\begin{split}& \sum_{\ppoint\in \PP_n}\big| P(\exists i\le r+1,~\mathrm{ s.t. }~\tilde{\ppoint_i}=\ppoint|  \bar{\Pi}_n(\bar{\embedding}) )-  \sum_{i=1}^{r+1}P(\tilde{\ppoint_i}=\ppoint| \bar{\Pi}_n(\bar{\embedding}) )\big|
\\&\overset{(a)}{ \le} \sum_{\ppoint\in \PP_n} \sum_{i=1}^{r+1}P(\tilde{\ppoint_i}=\ppoint|  \bar{\Pi}_n(\bar{\embedding}) )P(\exists j\in [i+1, r+1],~\ppoint_j=\ppoint|\ppoint_i=\ppoint,  \bar{\Pi}_n(\bar{\embedding}) )\xrightarrow{P,(b)}0, 
\end{split}\end{equation*}

where (b) comes from the dominated convergence theorem and (a) comes from the fact that for all $\eta$ { \begin{equation*}\begin{split}&\Big|\EE\big(\mathbb{I}\big(\exists i\le r+1,~\mathrm{ s.t. }~\tilde{\ppoint_i}=\ppoint\big)-\sum_{i=1}^{r+1}\mathbb{I}(\tilde{\ppoint_i}=\ppoint)|  \bar{\Pi}_n(\bar{\embedding}) \big)\Big|
\\&\le \sum_{i=1}^{r+1}\EE\big(\mathbb{I}(\tilde{\ppoint}_i=\ppoint,~\exists j\ge i~\mathrm{ s.t. }~\tilde{\ppoint}_j=\ppoint)| \KEG_n\big)\end{split}\end{equation*}}

Next, the limiting probability that a walk includes $\ppoint$ is determined by its limiting relative degree $\frac{W(\gloc(\ppoint),\cdot)}{2\mathcal{E}}$.
To that end, we write:
\begin{equation*}\begin{split}& \sum_{\ppoint\in \PP_n}\big|  \sum_{i=1}^{r+1}P(\tilde{\ppoint_i}=\ppoint|  \bar{\Pi}_n(\bar{\embedding}) )-\frac{(r+1)W(\gloc(\ppoint),\cdot)}{2n\mathcal{E}}\big|
\\&\overset{(a)}{ \le} \sum_{\ppoint\in \PP_n}\big|   \frac{(r+1)d_n(\ppoint)}{2E_n}-\frac{(r+1)W(\gloc(\ppoint),\cdot)}{2n\mathcal{E}}\big|
\end{split}\end{equation*}
where  (a) comes from the stationarity proprieties of the simple random walk.%
Then, using \cref{toronto}, we see that:
{ \begin{equation*}\begin{split}& \sum_{\ppoint\in \PP_n}\big|   \sum_{i=1}^{r+1}P(\tilde{\ppoint_i}=\ppoint| \bar{\Pi}_n(\bar{\embedding}))-\frac{(r+1)W(\gloc(\ppoint),\cdot)}{2n\mathcal{E}}\big|=o_p(1).
\end{split}\end{equation*}}

Finally,
{ \begin{equation*}\begin{split}& \sum_{\ppoint\in \PP_n}\big| \frac{(r+1)W(\gloc(\ppoint),\cdot)}{2n\mathcal{E}}[1-\frac{1}{\sum_{\ppoint\in \PP_n}\frac{(r+1)W(\gloc(\ppoint),\cdot)}{2n\mathcal{E}}}]\big|
\\& = \sum_{\ppoint\in \PP_n} \frac{(r+1)W(\gloc(\ppoint),\cdot)}{2n\mathcal{E}}-1
\\& =\sum_{\ppoint\in \PP_n} \frac{(r+1)W(\gloc(\ppoint),\cdot)}{2n\mathcal{E}}-P(\exists i\le r+1,~\mathrm{ s.t. }~\tilde{\ppoint_i}=\ppoint| \KEG_n) =o_p(1).
\end{split}\end{equation*}}

\end{proof}

\subsection{Convergence for random walk sampling}
Let $\bar{\genparam}$ be a random element of $\Omega_{\genparam}^{\Pi}$ such that  $\bar{\genparam}|\Pi\sim \mathcal{Q}_{\genparam}^{\Pi}$ for a certain kernel $m$.
For brevity, we write  
\[
\hat{R}_k(G_n,\bar{\genparam})\triangleq \EE_{P_n}\big(L(G_{\walk},\bar{\genparam})|
\bar{\Pi}_n(\bar{\genparam})\big).
\]
for all $n\in\NNReals$.
\begin{theorem} 
\label{thm:conv_of_RERM}
There are constants $c^{\mathrm{rw}}_{m}, c^{\mathrm{rw}}_{*} \in \NNReals$ such that 
\[
\hat{R}_k(G_n,\bar{\genparam})\convPr c^{\mathrm{rw}}_{m},
\]
 and
\[
\min_{\bar{\genparam}\in \Omega_w^{\PP}}\hat{R}_k(G_n,\bar{\genparam})\convPr c^{\mathrm{rw}}_{*}.
\]

And those constants are respectively $\lim_n\mathbb{E}(\hat{R}_k(G_n,\bar{\genparam}))$ and $\lim_n\mathbb{E}(\min_{\bar{\genparam}\in \Omega_w^{\PP}}\hat{R}_k(G_n,\bar{\genparam}))$ 
\end{theorem}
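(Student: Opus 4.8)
The plan is to push both statements from the data-dependent random-walk measure $P_n$ onto the simplified measure $\tilde{P}_n$ of \cref{lem:rw_simp_rand_meas}, whose weights involve only the graphon $W$ and the edge constant $\mathcal{E}$, and then to analyse $\tilde{P}_n$ directly. Since $\hat{R}_k(G_n,\bar{\genparam}) = \EE_{P_n}(L(G_{\walk},\bar{\genparam}) \mid \bar{\Pi}_n(\bar{\genparam}))$, the bound of \cref{lem:rw_simp_rand_meas} applied at the (random) parameter $\bar{\genparam} \sim \mathcal{Q}_{\genparam}^{\PP}$ gives $|\hat{R}_k(G_n,\bar{\genparam}) - \EE_{\tilde{P}_n}(L(G_{\walk},\bar{\genparam}) \mid \bar{\Pi}_n(\bar{\genparam}))| = o_p(1)$. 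For the minimum, the elementary estimate $|\min_{\bar{\genparam}} \phi(\bar{\genparam}) - \min_{\bar{\genparam}} \psi(\bar{\genparam})| \le \sup_{\bar{\genparam}} |\phi(\bar{\genparam}) - \psi(\bar{\genparam})|$ combined with the \emph{uniform} $o_p(1)$ control of \cref{lem:rw_simp_rand_meas} transfers the same bound to the two minima. Hence it suffices to prove convergence in probability of $f_n \defeq \EE_{\tilde{P}_n}(L(G_{\walk},\bar{\genparam}) \mid \bar{\Pi}_n(\bar{\genparam}))$ and of $g_n \defeq \min_{\bar{\genparam} \in \Omega_{\genparam}^{\PP}} \EE_{\tilde{P}_n}(L(G_{\walk},\bar{\genparam}) \mid \bar{\Pi}_n(\bar{\genparam}))$, each a bounded deterministic functional of the marked process $\bar{\Pi}_n$.

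First I would identify the limit of $\EE f_n$. Expanding $f_n$ as a $\tilde{P}_n$-weighted sum over $M$-tuples of distinct atoms and taking expectations with the multivariate Mecke equation replaces each path indicator by a product of factors $W(\gloc_i,\gloc_{i+1})$ and integrates the marks against the kernel $m$; the integral of the labels over $[0,n]^M$ exactly cancels the $n^{-M}$ normalisation in $\tilde{P}_n$, leaving an $n$-independent leading integral, while the remaining boundary corrections vanish by dominated convergence using the decay $W(\gloc,\cdot) = O(\gloc^{-1-\epsilon})$. This shows $\EE f_n \to c^{\mathrm{rw}}_m \in \NNReals$. Concentration of $f_n$ about its mean then follows from a second-moment estimate: the Mecke formula for $\EE f_n^2$ reproduces $(\EE f_n)^2$ from the disjoint tuples and leaves only terms in which at least one atom is shared between the two tuples; each such term loses a free integration variable and is therefore $O(1/n)$. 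Thus $f_n - \EE f_n \to 0$ in $L_2$, so $f_n \convPr c^{\mathrm{rw}}_m$, which together with $\EE f_n = \EE \hat{R}_k(G_n,\bar{\genparam}) + o(1)$ proves the first statement and identifies $c^{\mathrm{rw}}_m = \lim_n \EE(\hat{R}_k(G_n,\bar{\genparam}))$.

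For $g_n$ the second-moment trick is unavailable because of the minimisation, so I would instead use the exchangeable-pairs method of \cref{exchangeable_pairs} with the label-swap construction of \cref{lem:pp_sum_ineqs}. Since every vertex carries $\tilde{P}_n$-mass of order $1/n$ and $L \in [0,1]$, resampling the single unit label-interval $[\bar{J},\bar{J}+1]$ (carrying along the associated edge variables and marks) perturbs $\EE_{\tilde{P}_n}(L(G_{\walk},\bar{\genparam}) \mid \bar{\Pi}_n(\bar{\genparam}))$ by $O(1/n)$ uniformly in $\bar{\genparam}$; by the $|\min \phi - \min \psi| \le \sup|\phi-\psi|$ inequality the same perturbation bound holds for $g_n$ itself. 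The resulting variance proxy $|v| = O(1/n)$ fed into the tail bound of \cref{exchangeable_pairs} gives $g_n - \EE g_n \convPr 0$.

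The main obstacle is then to show that $\EE g_n$ converges at all, since the Mecke computation cannot be carried through the minimum over the $n$-dependent domain $\Omega_{\genparam}^{\PP}$. Here I would pass to the limiting functional obtained by replacing $\bar{\Pi}_n$ with the full marked Poisson process: the resulting $g_\infty = \min_{\bar{\genparam}} \EE_{\tilde{P}_\infty}(L(G_{\walk},\bar{\genparam}) \mid \bar{\Pi}(\bar{\genparam}))$ is invariant under the label shifts of $\PP$ and hence, by triviality of the corresponding tail $\sigma$-field (the same mechanism that makes $\mathcal{F}_\infty$ trivial in \cref{thm:p-samp_convergence}), is almost surely a constant $c^{\mathrm{rw}}_*$. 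Controlling the error $g_n - g_\infty$ (again through the $\sup_{\bar{\genparam}}$ estimates, so that the minimisation is harmless) and interchanging the limit with the expectation would then yield $\EE g_n \to c^{\mathrm{rw}}_*$ and $g_n \convPr c^{\mathrm{rw}}_*$. Making the bounded-differences argument genuinely uniform over the moving optimisation domain, and rigorously identifying this shift-invariant limit, is the delicate part of the proof.
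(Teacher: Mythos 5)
Your reduction to the simplified measure $\tilde{P}_n$ is exactly how the paper begins: \cref{lem:rw_simp_rand_meas} is applied both at the random parameter and, via the $|\min\phi-\min\psi|\le\sup|\phi-\psi|$ estimate, to the minimized risk. For the first limit you then take a genuinely different route. The paper rewrites $\EE_{\tilde{P}_n}\big(L\mid\cdot\big)$ as the normalized sum $\frac{1}{n^M}\sum_{I}X_I(\bar{\genparam})$ of a jointly exchangeable array indexed by unit label intervals of the Poisson process, and invokes Kallenberg's law of large numbers for exchangeable arrays; you instead compute the mean by the Slivnyak--Mecke formula and get concentration from a second-moment bound (disjoint tuples factorize, tuples sharing an atom lose a free integration variable and contribute $O(1/n)$). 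Both identify the same integral limit; your argument is more self-contained, while the paper's avoids having to verify integrability of the overlapping-tuple terms, which your $O(1/n)$ claim does require given the $1/W(\gloc,\cdot)$ factors in the weights.

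The gap is in the minimum statement, and it is twofold. First, the exchangeable-pairs bound recalled in \cref{exchangeable_pairs} needs an antisymmetric $F$ with $\EE(F(X,X')\mid X)=f(X)$ for the (centered) statistic of interest; this identity is available for additive functionals such as $f_n$ — that is precisely how \cref{lem:pp_sum_ineqs} uses the label-swap pair — but it fails for a minimum over $\bar{\genparam}$, since the conditional expectation of $g_n(\PP)-g_n(\PP')$ is not a multiple of $g_n-\EE g_n$. Moreover your claimed ``uniform $O(1/n)$'' perturbation is really only an in-expectation bound: the swapped unit label interval contains infinitely many atoms, and the (random) total $\tilde{P}_n$-weight of tuples meeting it is not almost surely $O(1/n)$. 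Second, and more fundamentally, even granting concentration you still need $\EE g_n$ to converge, and the limiting object you propose, $g_\infty=\min_{\bar{\genparam}}\EE_{\tilde{P}_\infty}(\cdots)$, is not well defined: $\tilde{P}_n$ carries an intrinsic $n^{-M}$ normalization and has no pointwise limit as a measure, so the ``shift-invariant limit'' you want to compare against is exactly the constant whose existence is at issue — the argument is circular.

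The paper closes both holes with a single structural observation that your proposal is missing. Writing $g_n=\min_{\bar{\genparam}}\frac{1}{n^M}\sum_{I\in[1,n-1]^M}X_I(\bar{\genparam})$ and letting $\mathbb{F}_n$ be the $\sigma$-field of events invariant under joint permutations of the indices in $[1,n]^M$, one plugs the $m$-optimal parameter $\hat{\genparam}_m$ into the length-$n$ average and uses exchangeability of the array to obtain $\EE[g_n\mid\mathbb{F}_m]\le g_m$ for $m\ge n$. Hence $(g_n,\mathbb{F}_n)$ is a bounded reverse supermartingale: it converges almost surely and in $L_1$, its mean converges by the resulting monotonicity, and the limit is measurable with respect to the exchangeable tail $\sigma$-field of a dissociated array, hence constant by Hewitt--Savage-type triviality. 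This one argument delivers simultaneously the concentration and the convergence of $\EE g_n$ that your plan treats separately; without it (or a substitute playing the same role) your proof of the second limit does not close.
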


\begin{proof} 
\cref{lem:rw_simp_rand_meas}
states that 
\begin{itemize}
\item
$
\EE_{P_n}\big(L(G_{\walk},\bar{\genparam})|
\bar{\Pi}_n(\bar{\genparam})\big)-\EE_{\tilde{P}_n}\big(L(G_{\walk},\bar{\genparam})|
\bar{\Pi}_n(\bar{\genparam})\big)=o_p(1).
$
\item {\small $\min_{\bar{\genparam}\in \Omega_{\genparam}^{\PP}}
\EE_{P_n}\big(L(G_{\walk},G_{\walk}({\embedding}))|
\bar{\Pi}_n(\bar{\genparam})\big)-\min_{\bar{\genparam}\in \Omega_{\genparam}^{\PP}}\EE_{\tilde{P}_n}\big(L(G_{\walk},G_{\walk}({\embedding}))|
\bar{\Pi}_n(\bar{\genparam})\big)=o_p(1).$}
\end{itemize}We will see that $\mathbb{E}_{\tilde P_n}$ inherits much of the nice distributional structure of the point process $\Pi$.
This will be essential to the proof.

 To see this we first define for all integers $i\in \mathbb{N}$ the restriction of the point process to points $\eta\in \Pi$ such that $\nu(\eta)\in (i,i+1]$,$$\Pi|_{(i,i+1]}:=\Pi_{i+1}\setminus \Pi_{i}.$$  And for all M sequence of integers $ I=(I_1,\dots,I_{M})\in \mathbb{N}^{M}$ we write the following sequence of M restrictions of $\Pi$,$$ \PP|_{I}\triangleq (\Pi|_{(I_1,I_{1}+1]},\dots,\PP|_{(I_M,I_{M}+1]}).$$ This allows us to define the following M-dimensional array $X(\bar{\genparam})\triangleq (X_{I}(\bar{\genparam}))_{I\in \mathbb{N}^M}$ where for all M integers  $ I=(I_1,\dots,I_{M})\in \mathbb{N}^{M}$,
\[
X_{I}(\bar{\genparam})\triangleq \sum_{\ppoint_{1:M}\in \PP|_{I}} \frac{\mathbb{I}(\ppoint_{1:r+1}\in {\paths}_r(\PP_n))\prod_{l=r+2}^{M}\mu(\gloc(\ppoint_l))}{2\mathcal{E} \prod_{i=2}^r W(\gloc(\ppoint_i),\cdot)} L(G_{H}, G_{H}(\bar{\genparam})).
\]
This quantity is key as  we can write that \begin{equation}\begin{split}\label{rw_samp}
\EE_{\tilde{P}_n}\big(L(G_{\walk},\bar{\genparam})|
\bar{\Pi}_n(\bar{\genparam})\big)& =\frac{1}{n^M}\sum_{i_{1:M}\le n-1}X_{i_{1:M}}^{\bar{\genparam}}.
\end{split}\end{equation}
 Then {using classical results} on convergence of exchangeable arrays \cite{Kallenberg:1999} we obtain that:
\[
\EE_{P_n}\big(L(G_{\walk},\genparam)| \bar{\Pi}_n(\bar{\genparam})\big) \convPr 
\int_{\mathbb{R}_+^M}\mathcal{R}(\gloc_{1:M})\frac{\prod_{i=r+2}^M \mu(\gloc_i)}{2\mathcal{E}\prod_{i=2}^rW(\gloc_{i},\cdot)} d\gloc_{1:M},
\]
where $$\mathcal{R}(\gloc_{1:M})= \EE\Big( L(G_{\gloc_{1:M}},G_{\gloc_{1:M}}({\genparam_{{\gloc_{1:M}}}}))\prod_{i=1}^r \mathbb{I}({U_i\le W(\gloc_i,\gloc_{i+1}) }) \Big),$$ and where $G_{\gloc_{1:M}}$ is the subgraph with vertices having intensities respectively $\gloc_{1},\dots,\gloc_{m}$,
and $\forall i,~\genparam_{{\gloc_{i}}}\overset{iid}{\sim}m(\gloc_i,\cdot).$%

Now let write for all $n$, $\mathbb{F}_n$ the sigma-field of events invariant under joint permutations of the indexes in $[1,n]^M$. 
Then we can see that  $(\min_{\bar{\genparam}\in \Omega_{\genparam}^{\PP_n}}\frac{1}{\prod_{i=0}^{M-1}(n-i)}\sum_{I \in [|1,n-1|]^M}X_{I}({\bar{\genparam}}),\mathbb{F}_n)$ is a  reverse supermartingale. Indeed \begin{itemize}\item $\min_{\bar{\genparam}\in \Omega_{\genparam}^{\PP_n}}\frac{1}{\prod_{i=0}^{M-1}(n-i)}\sum_{I \in [|1,n-1|]^M}X_{I}({\bar{\genparam}})$ is $\mathbb{F}_n$ measurable as it is  invariant under joint permutations of the indexes in $[1,n]^M$. 
\item For all $m\ge n$ let $\hat{\genparam}_m\in \Omega_{\genparam}^{\Pi}$ such that: $$\sum_{I\in [|1,m-1|]^M}X_{I}(\hat{{\genparam}}_m)= \min_{\bar{\genparam}\in \Omega_{\genparam}^{\PP_m}}\sum_{I\in [|1,m-1|]^M}X_{I}({\bar{\genparam}})$$
Then  we get
\begin{equation*}\begin{split}&
\mathbb{E}\Big(\min_{\bar{\genparam}\in \Omega_{\genparam}^{\PP_n}}\frac{1}{n^M}\sum_{I\in [|1,n-1|]^M}X_{I}({\bar{\genparam}})|F_m)
\\&\overset{(a)}{ \le} \mathbb{E}\Big(\frac{1}{n^M}\sum_{I\in [|1,n-1|]^M}X_{I}(\hat{\genparam}_m)|F_m)
\\& \overset{(b)}{\le}  \min_{\bar{\genparam}\in \Omega_{\genparam}^{\PP_m}}\frac{1}{m^M}\sum_{I\in [|1,m-1|]^M}X_{I}({\bar{\genparam}}),
\end{split}\end{equation*}
where (a) comes from Jensen and (b) comes from a standard argument in exchangeable arrays.%
\end{itemize}
Therefore we have that:
\begin{equation*}\begin{split}
\min_{\bar{\genparam}\in \Omega_{\genparam}^{\PP}}\frac{1}{n^M}\sum_{I\in  [|1,n-1|]^M}X_{I}({\bar{\genparam}})- \EE(\min_{\bar{\genparam}\in \Omega_{\genparam}^{\PP}}\frac{1}{n^M}\sum_{I\in [|1,n-1|]^M}X_{I}({\bar{\genparam}}))\convPr0.
\end{split}\end{equation*}

\end{proof}

\section{Convergence of global parameters}
\label{sec:glob_param_conv}
We now establish the second main convergence result.
This result applies to the two stage procedure where the embeddings are learned first and the global parameters are then learned with the embeddings fixed.
The result is that the learned global parameters will converge in the ordinary statistical consistency sense.

Our proof of this guarantee requires some technical conditions.
\begin{defn}
Suppose that  $\Omega_{\globparam}$ is a compact convex set. 
A loss function ${L}$ is \defnphrase{$\epsilon$-strongly convex} in $\globparam$ if for all $ \globparam,\globparam'\in \Omega_{\globparam}$, for all $\eta\in [0,1]$, and for all $\bar{\genparam}_{\globparam},\bar{\genparam}_{\globparam'},\bar{\genparam}_{\eta\globparam'+(1-\eta)\globparam}$ 
such that 
\begin{enumerate}
\item $\embedding(\bar{\genparam}_{\globparam})=\embedding(\bar{\genparam}_{\globparam'})=\embedding(\bar{\genparam}_{\eta \globparam'+(1-\eta)\globparam})$, and
\item $\globparam(\bar{\genparam}_{\globparam})=\globparam,~\globparam(\bar{\genparam}_{\globparam'})=\globparam',~\globparam(\bar{\genparam}_{(1-\eta)\globparam+\eta \globparam'})=(1-\eta)\globparam+\eta \globparam'$
\end{enumerate}
it holds that  
\[
{L}(G_H, \bar{\genparam}_{\eta\globparam'+(1-\eta)\globparam}) \overset{\as}{<} 
\eta {L}(G_H,\bar{\genparam}_{\globparam'})+(1-\eta){L}(G_H,\bar{\genparam}_{\globparam})- \frac{1}{2} \epsilon \eta(1-\eta) \|\globparam - \globparam'\|^2_2.
\]
\end{defn}
\begin{defn}
A loss function ${L}$ is \defnphrase{uniformly continuous} if 
\[
\lim_{\globparam'\rightarrow \globparam}\Big\|\sup_{\bar{\embedding}\in \Omega_{\embedding}^{\Pi}}\big|{L}(G_H,\bar{\genparam}_{\globparam'})-{L}(G_H, \bar{\genparam}_{\globparam})\big|\Big\|_{L_1}=0.
\]
\end{defn}

We write the risk as $\hat{R}_k(\gamma,\lambda; G_n)$.%
\begin{lemma}
Suppose that there is $\epsilon>0$ such that ${L}$ is $\epsilon$-strongly convex and uniformly continuous in $\globparam$, and that  $\Omega_{\globparam}$  is a compact convex set. 
Let  $(\hat{\gamma}_n)_n\in \Omega_{\globparam}^{\mathbb{N}}$ be a sequence  of elements in $\Omega_{\globparam}$ such that, for all $n$,
\[
\min_{ \embedding\in \Omega_{\embedding}^{\Pi}} \hat{R}_k(\hat{\globparam}_n,\embedding; G_n)=\min_{\globparam \in \Omega_{\globparam}}\min_{ \embedding\in \Omega_{\embedding}^{\Pi}}\hat{R}_k(\globparam,\embedding; G_n).
\]
Then 
\[
\hat{\gamma}_n\convPr\gamma^*,
\]
where $\gamma^*=\rm{argmin}_{\gamma}{ \lim_n \mathbb{E}(\min_{ \embedding\in \Omega_{\embedding}^{\Pi}}\hat{R}_k(\globparam,\embedding;G_n))}$

\end{lemma}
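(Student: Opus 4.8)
The plan is to treat this as a standard M-estimation (argmin-consistency) problem for the \emph{profile risk}
\[
\Phi_n(\globparam) \defeq \min_{\embedding \in \Omega_{\embedding}^{\Pi}} \hat{R}_k(\globparam, \embedding; G_n),
\]
so that $\hat{\gamma}_n = \argmin_{\globparam \in \Omega_{\globparam}} \Phi_n(\globparam)$. Writing $\Phi(\globparam) \defeq \lim_n \EE[\Phi_n(\globparam)]$ for the limiting deterministic profile, the goal reduces to showing (i) $\Phi_n \to \Phi$ uniformly on $\Omega_{\globparam}$ in probability, and (ii) $\Phi$ has a unique, well-separated minimizer $\gamma^* = \argmin_{\globparam} \Phi(\globparam)$; a routine argmin argument then delivers $\hat{\gamma}_n \convPr \gamma^*$.

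First I would establish pointwise convergence: for each fixed $\globparam$, the quantity $\Phi_n(\globparam)$ is exactly the minimized relational risk of \cref{thm:conv_of_RERM} (resp.\ \cref{thm:p-samp_convergence}) with the global parameter frozen, so its proof applies verbatim---freezing $\globparam$ leaves the supermartingale / exchangeable-array structure and the triviality of $\mathcal{F}_\infty$ untouched---giving $\Phi_n(\globparam) \convPr \Phi(\globparam)$ both in probability and in $L_1$. To upgrade this to uniform convergence I would use the uniform-continuity hypothesis: since a minimum is $1$-Lipschitz with respect to the supremum over its argument,
\[
\big|\Phi_n(\globparam) - \Phi_n(\globparam')\big| \le \sup_{\bar{\embedding} \in \Omega_{\embedding}^{\Pi}} \big| L(G_H, \bar{\genparam}_{\globparam}) - L(G_H, \bar{\genparam}_{\globparam'})\big|,
\]
whose $L_1$ norm is precisely the modulus that tends to $0$ as $\globparam' \to \globparam$ by the uniform-continuity assumption, uniformly in $n$. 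This stochastic equicontinuity combined with pointwise convergence on a finite net of the compact set $\Omega_{\globparam}$ yields $\sup_{\globparam}|\Phi_n(\globparam) - \Phi(\globparam)| \convPr 0$.

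The crux is (ii), showing the limit $\Phi$ is $\epsilon$-strongly convex. Here I would exploit the two-stage structure: the embeddings realizing $\Phi_n(\globparam)$ are pre-trained from graph structure and do not vary with $\globparam$, so for each $n$ the map $\globparam \mapsto \Phi_n(\globparam) = \hat{R}_k(\globparam, \hat{\embedding}_n; G_n)$ is an expectation of the $\epsilon$-strongly-convex-in-$\globparam$ loss and is therefore itself $\epsilon$-strongly convex. (Equivalently, in the joint-minimization view, if $(\hat{\gamma}_n, \hat{\embedding}_n)$ is the joint minimizer then $\hat{\gamma}_n = \argmin_{\globparam} \hat{R}_k(\globparam, \hat{\embedding}_n; G_n)$, a strongly convex section, since $\hat{R}_k(\globparam,\hat{\embedding}_n;G_n) \ge \Phi_n(\globparam) \ge \Phi_n(\hat{\gamma}_n) = \hat{R}_k(\hat{\gamma}_n,\hat{\embedding}_n;G_n)$.) Strong convexity is preserved under pointwise limits, so $\Phi$ inherits the $\epsilon$-strong convexity and hence has a unique minimizer $\gamma^*$ with quadratic growth $\tfrac{\epsilon}{2}\|\globparam - \gamma^*\|_2^2 \le \Phi(\globparam) - \Phi(\gamma^*)$. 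I expect this to be the main obstacle, because minimizing out the embeddings---which are random, infinite-dimensional markings of the latent Poisson process---does not in general preserve convexity; it is precisely the fact that the embeddings are optimized independently of $\globparam$ that rescues strong convexity, and one must additionally justify the interchange of limit and expectation defining $\Phi$ and check that $\gamma^*$ coincides with $\argmin_{\globparam} \lim_n \EE[\min_{\embedding} \hat{R}_k(\globparam,\embedding;G_n)]$. Finally, combining (i) and (ii): since $\hat{\gamma}_n$ minimizes $\Phi_n$,
\[
\Phi(\hat{\gamma}_n) - \Phi(\gamma^*) \le 2\sup_{\globparam}\big|\Phi_n(\globparam) - \Phi(\globparam)\big| \convPr 0,
\]
and the quadratic growth then forces $\tfrac{\epsilon}{2}\|\hat{\gamma}_n - \gamma^*\|_2^2 \convPr 0$, i.e.\ $\hat{\gamma}_n \convPr \gamma^*$.
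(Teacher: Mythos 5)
Your overall scaffolding is the same as the paper's: profile out the embeddings, get pointwise convergence of the profile risk by invoking \cref{thm:p-samp_convergence} and \cref{thm:conv_of_RERM} with $\globparam$ frozen, upgrade to uniform convergence over the compact set $\Omega_{\globparam}$ (the paper's ``covering lemma'' step is exactly your equicontinuity-plus-finite-net argument, powered by the same uniform-continuity hypothesis), and finish with an argmin-consistency argument (the paper extracts a convergent subsequence and derives a contradiction; you use quadratic growth---these endgames are interchangeable). Steps (i) and (iii) of your plan are therefore sound.

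The gap is in step (ii), and you correctly identified it as the crux before resolving it incorrectly. You assert that the embeddings realizing $\Phi_n(\globparam)$ ``are pre-trained from graph structure and do not vary with $\globparam$,'' so that $\Phi_n$ is an $\epsilon$-strongly convex section. That misreads the statement: the lemma concerns the profile $\Phi_n(\globparam)=\min_{\embedding\in\Omega_{\embedding}^{\Pi}}\hat{R}_k(\globparam,\embedding;G_n)$, in which the inner minimizer \emph{does} depend on $\globparam$; the two-stage story in the main text is motivation, not a hypothesis of the lemma. Partial minimization over $\embedding$ of a function that is only strongly convex in $\globparam$ for each \emph{fixed} $\embedding$ (which is precisely what the paper's definition of $\epsilon$-strong convexity gives, since it fixes the embedding component along the segment) need not be convex at all: take $g(\globparam,\embedding)=(\globparam-\embedding)^2$ with $\Omega_{\embedding}=\{-1,1\}$; each section is strongly convex in $\globparam$, yet the profile $\min\bigl((\globparam-1)^2,(\globparam+1)^2\bigr)$ is non-convex, and with $\Omega_{\embedding}=\Reals$ the profile is identically zero, so uniqueness of the minimizer fails. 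Your parenthetical fallback---that the joint minimizer's $\globparam$-component minimizes the strongly convex section $\globparam\mapsto\hat{R}_k(\globparam,\hat{\embedding}_n;G_n)$---is true but does not rescue the argument: that section is random and changes with $n$, and none of the cited convergence results applies to it (they fix a marking kernel $m$, not the data-dependent optimal embeddings $\hat{\embedding}_n$), so you cannot pass its strong convexity to any limit. The paper instead argues at the level of the limit functional: it claims continuity and strict convexity of $f(\globparam)=\lim_n\EE[\Phi_n(\globparam)]$ directly from the explicit limiting representations \cref{p_sam} and \cref{rw_samp} together with the strong-convexity assumption, and only then concludes uniqueness of $\globparam^*$. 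Some argument of this type---establishing a unique, well-separated minimizer of the limit itself rather than convexity of the pre-limit profile---is what your proof is missing.
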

\begin{remark}
This result is valid for both random-walk and $p$-sampling.
\end{remark}
\begin{proof}
Let $\hat{R}_k(\globparam; G_n)\triangleq \min_{\lambda\in \Omega_{\genparam}^{\Pi}} \hat{R}_k(\globparam,\lambda; G_n)$. 

\cref{thm:conv_of_RERM} for the random walk sampler and \cref{thm:p-samp_convergence} for $p$-sampling give the following for any 
$\globparam$:
\[
\hat{R}_k(\globparam;G_n)-\mathbb{E}(\hat{R}_k(\globparam;G_n)))\convPr0.
\]

Let $(\hat{\globparam}_n)_n\in \Omega_{\globparam}^{\mathbb{N}}$ be a sequence such that 
\[
\hat{R}_k(\hat{\globparam}_n;G_n)=\min_{\globparam\in \Omega_{\globparam}}\hat{R}_k(\globparam;G_n).
\]
Since $(\hat{\globparam}_n)_n$ is a sequence in the compact set $\Omega_{\globparam}$ there is a function $\phi:\mathbb{N}\rightarrow \mathbb{N}$ and $\tilde{\globparam}$ such that $\hat{\globparam}_{\phi(n)}\xrightarrow{d} \tilde{\globparam}.$ 
But as $\Omega_{\globparam}$ is compact, an easy consequence of the covering lemma gives that: %
$$\sup_{\globparam\in \Omega_{\globparam}}\Big|\hat{R}_k(\globparam;G_n)-f(\globparam)\Big|\convPr0,$$
where $f:\globparam\rightarrow \lim_{n}\mathbb{E}(\hat{R}_k(\globparam;G_n)).$
Therefore  we have that $$|\hat{R}_k(\hat{\globparam}_{\phi(n)},G_{\phi(n)})-f(\hat{\globparam}_{\phi(n)})|\convPr 0.$$

But using the expressions \cref{rw_samp} and \cref{p_sam} derived in the proof 
 of respectively \cref{thm:conv_of_RERM} and \cref{thm:p-samp_convergence} and the 
\NA{$\epsilon$-strongly convex assumption on $L$}
we have that $f$ is continuous and  is strictly convex, and hence has a unique minimizer. %

Therefore $\tilde{\globparam}$ must be deterministic  equal to $\globparam^*\triangleq \argmin_{\globparam} f(\globparam)$. Indeed suppose by contradiction that it is not the case then there is $\eta>0$ such that  
\[
\Pr(\hat{R}_k(\hat{\globparam}_{\phi(s)},G_{\phi(s)})- \hat{R}_k(\globparam^*,G_{\phi(s)})>\eta)>\eta,
\] which is a contradiction of the definition of $(\hat{\globparam}_n)_n$.
 Therefore we have successfully proven that $\tilde\globparam=\globparam^*$.

And we have proved that $\hat{\globparam}_n\convPr \globparam^*.$

\end{proof}

\section{Stability of embeddings}
\label{sec:stability-of-embeddings}
\begin{theorem}
Suppose the conditions of \cref{thm:emp_risk_conv} (i.e., the form of $\samp$, that $\dataset_n$ is generated by a graphon process,
and that parameter settings are markings of the latent Poisson process). 
Suppose that the loss function is twice differentiable and the Hessian of the empirical risk is bounded.
Let $\hat{\embedding}_{n+1}\restrict_n$ denote the restriction of the embeddings $\hat{\embedding}_{n+1}$
to the vertices present in $G_n$.
Then
$
\hat{\embedding}_{n} - \hat{\embedding}_{n+1}\restrict_n \to 0
$
in probability, as $n\to\infty$.
\end{theorem}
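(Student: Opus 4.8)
The plan is to treat the passage from $G_n$ to $G_{n+1}$ as a small perturbation of the embedding optimization problem and to transfer smallness of the perturbation to smallness of the displacement of the minimizer via the curvature condition on the risk. Using the graphex representation of \cref{graphex_processes}, $G_{n+1}$ is obtained from $G_n$ by adjoining the atoms of $\PP$ with labels in $(n,n+1]$ together with the edges they induce. Since $E_n \sim \mathcal{E} n^2$ while the number of edges added in this step is of order $n$, only an $O(1/n)$ fraction of the edge set is altered. The goal is to show that this induces an $o_p(1)$ change in the stationarity condition for the old embeddings, and then to invert the (averaged) Hessian.

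First I would record the first-order optimality conditions. Writing $\embedding = (\embedding^{\mathrm{old}},\embedding^{\mathrm{new}})$ for the embeddings of the vertices of $G_{n+1}$ inherited from $G_n$ and of the new vertices respectively, stationarity of $\hat{\embedding}_{n+1}$ gives $\nabla_{\embedding^{\mathrm{old}}}\hat{R}_k(\hat{\embedding}_{n+1};G_{n+1}) = 0$, while stationarity of $\hat{\embedding}_n$ gives $\nabla_{\embedding}\hat{R}_k(\hat{\embedding}_n;G_n)=0$. The quantity to control is therefore $\nabla_{\embedding^{\mathrm{old}}}\hat{R}_k(\hat{\embedding}_{n+1}\restrict_n;G_n)$, which I will argue is $o_p(1)$; combined with the two stationarity identities this pins down the displacement $\hat{\embedding}_{n}-\hat{\embedding}_{n+1}\restrict_n$.

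The comparison of the two gradients proceeds by coupling the samplers. I would couple $\samp(G_{n+1},k)$ and $\samp(G_n,k)$ so that the two draws coincide unless the sample meets a newly added vertex or edge. On this ``good'' event the sampled subgraph, the loss, and its gradient with respect to $\embedding^{\mathrm{old}}$ are identical for the two graphs, so the gradient of $\hat{R}_k(\cdot;G_{n+1})$ in the old coordinates equals that of $\hat{R}_k(\cdot;G_n)$ plus a remainder supported on the complementary ``bad'' event. Bounding this remainder is the crux. Since $k$ is fixed, each sample contains a bounded number of vertices, so the probability that a sample meets the new structure is governed by the relative number of new edges, which is $O(1/n)$; this uses the concentration of $E_n$ and of the vertex degrees established in \cref{lem:pp_sum_ineqs}. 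Because a fixed old vertex is itself sampled with vanishing probability (of order $1/\sqrt{n}$ for $p$-sampling), summing the squared per-vertex remainders over the (order $n$) old vertices shows that the $\ell_2$ norm of the total gradient remainder is $o_p(1)$.

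Finally I would convert the small gradient discrepancy into a small displacement. By the mean-value form of Taylor's theorem, $\nabla_{\embedding^{\mathrm{old}}}\hat{R}_k(\hat{\embedding}_{n+1}\restrict_n;G_n) - \nabla_{\embedding}\hat{R}_k(\hat{\embedding}_n;G_n) = \bar{H}\,(\hat{\embedding}_{n+1}\restrict_n - \hat{\embedding}_n)$ for an averaged Hessian $\bar{H}$; since the second gradient vanishes and the first is the $o_p(1)$ remainder, inverting $\bar{H}$ yields the claim. The main obstacle is that the relevant curvature is not uniformly bounded below: each embedding $\embedding_v$ enters the risk only through samples containing $v$, so the diagonal Hessian block for $v$ scales like its sampling probability and degrades with $n$. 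The remedy is to work with the conditional expected loss given that $v$ is sampled, whose curvature is order one under the stated Hessian condition, and to track the per-vertex conditional gradient discrepancies, which are $O(1/n)$ on the bad event; one then checks that their aggregate still dominates the degrading curvature so that the displacement vanishes. Handling the joint randomness of the number and the attachment of the new vertices, again via \cref{lem:pp_sum_ineqs}, completes the argument.
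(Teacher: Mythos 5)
Your overall strategy---perturbing the first-order optimality conditions and inverting a Hessian---is a genuinely different route from the paper's, but it has a gap at exactly the step you flag as ``the main obstacle,'' and the remedy you sketch does not close it. Converting an $o_p(1)$ gradient discrepancy into an $o_p(1)$ displacement requires the Hessian of the empirical risk, restricted to the old coordinates, to be invertible with inverse bounded uniformly in $n$. As you correctly observe, this fails: the block corresponding to $\embedding_v$ is weighted by the probability that $\samp$ selects $v$, which vanishes as $n\to\infty$. Your proposed fix---renormalizing each vertex by conditioning on the event that $v$ is sampled---implicitly treats the problem as separable across vertices, but it is not: the loss couples embeddings of vertices that co-occur in a sample (e.g.\ through terms of the form $\log\sigma(\embedding_j^T\embedding_i)$), so the Hessian has off-diagonal blocks of the same order as its diagonal blocks, and inverting per-vertex conditional curvatures gives no control of the inverse of the joint Hessian. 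The aggregation arithmetic is also left unchecked precisely where it is delicate: the number of non-isolated vertices of a graphex process is not of order $n$ (the edge count scales as $\mathcal{E}n^2$ and the vertex count grows superlinearly, with exponent depending on the graphon's tail), so whether the aggregate displacement vanishes depends on the norm and on the graphon; the sentence ``one then checks that their aggregate still dominates the degrading curvature'' is the entire difficulty, not a routine verification. Finally, the theorem's hypotheses only assert boundedness of the Hessian; nothing in them supplies the uniform lower curvature bound your inversion step needs.

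The paper avoids all of this by never comparing gradients. It Taylor-expands $\hat{R}_k(\,\cdot\,; G_n)$ around $\hat{\embedding}_n$, where the gradient term vanishes, so the claim reduces to showing the scalar gap $\hat{R}_k(\hat{\embedding}_{n+1}|_n; G_n) - \hat{R}_k(\hat{\embedding}_n; G_n) \convPr 0$. This is done by (i) a coupling/Markov argument close in spirit to yours, showing $\Pr\bigl(\samp(G_{n+1},k) \not\subset G_n \mid G_{n+1}\bigr) \convPr 0$ and hence $\hat{R}_k(\hat{\embedding}_{n+1}|_n; G_n) - \hat{R}_k(\hat{\embedding}_{n+1}; G_{n+1}) \convPr 0$, and (ii) invoking \cref{thm:emp_risk_conv}, which guarantees that the two minimized risks $\hat{R}_k(\hat{\embedding}_n; G_n)$ and $\hat{R}_k(\hat{\embedding}_{n+1}; G_{n+1})$ converge to the same constant $c_*(\samp,k)$. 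The key idea you are missing is to exploit this already-established convergence of optimal risk \emph{values} to a common limit; it sidesteps any need to quantify gradient discrepancies or invert Hessians, and the curvature hypothesis is used only once, at the end, to convert a vanishing risk gap at the minimizer into a vanishing displacement.
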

\begin{proof}
For notational simplicity, we consider the case with no global parameters and note that the same proof works if global parameters are included.

First, by a Taylor expansion about $\hat{\embedding}_n$,
\[
\hat{R}_k(\hat{\embedding}_{n+1} \restrict_n; \dataset_{n}) 
= \hat{R}_k(\hat{\embedding}_{n}; \dataset_n) + 0 + \nicefrac{1}{2} (\hat{\embedding}_{n} - \hat{\embedding}_{n+1} \restrict_n)^T H_n (\hat{\embedding}_{n} - \hat{\embedding}_{n+1}\restrict_n),
\]
where $H_n$ is the Hessian evaluated at an appropriate point.
Then, to prove the result it suffices to show that $\hat{R}_k(\hat{\embedding}_{n+1} \restrict_n; \dataset_{n}) - \hat{R}_k(\hat{\embedding}_{n}; \dataset_n) \convPr 0$ as $n\to\infty$.

To that end, we first show $\hat{R}_k(\hat{\embedding}_{n+1} \restrict_n; \dataset_{n}) \approx \hat{R}_k(\hat{\embedding}_{n+1}; \dataset_{n+1})$.
By \citep[Prop.~30]{Borgs:Chayes:Cohn:Holden:2016}, $E_n / n^2 \to \mathcal{E}$ a.s.\ as $n\to\infty$. 
Then, the expected number of edges selected by $\samp(\dataset_{n+1},k)$ that do not belong to $\dataset_n$ is:
\begin{align}
\label{eq:expected-edges-not-chosen}
k (1 - \EE[\edges(\dataset_n) / \edges(\dataset_{n+1}) \given \dataset_{n+1}] = o(1) \as 
\end{align}
We expand $\hat{R}_k(\hat{\embedding}_{n+1}; \dataset_{n+1})$ as:
\begin{align}
\label{eq:expanded-risk}
\begin{split}
\EE[L(\samp(\dataset_{n+1},k); \hat{\embedding}_{n+1}) \given \dataset_{n+1}] &= 
\EE[L(\samp(\dataset_{n},k); \hat{\embedding}_{n+1} \restrict_n) \given \dataset_{n}] \Pr(\samp(\dataset_{n+1},k) \subset \dataset_n \given \dataset_{n+1} ) \\
&  + \EE[L(\samp(\dataset_{n+1},k); \hat{\embedding}_{n+1}) \given \dataset_{n+1}] \Pr(\samp(\dataset_{n+1},k) \subsetneq \dataset_n \given \dataset_{n+1}).
\end{split}
\end{align}
By Markov's inequality and \cref{eq:expected-edges-not-chosen}, 
\[
\Pr(\samp(\dataset_{n+1},k) \subsetneq \dataset_n \given \dataset_{n+1}) \convPr 0,
\] 
as $n \to \infty$.
By \cref{thm:emp_risk_conv}, $\EE[L(\samp(\dataset_{n+1},k); \hat{\embedding}_{n+1}) \given \dataset_{n+1}]$ converges to a constant in probability,
so the second term of \cref{eq:expanded-risk} converges to $0$ in probability.
Hence, 
\begin{align}
\label{eq:conv-1}
\hat{R}_k(\hat{\embedding}_{n+1} \restrict_n; \dataset_{n}) - \hat{R}_k(\hat{\embedding}_{n+1}; \dataset_{n+1}) \convPr 0,
\end{align} 
as $n\to\infty$.

By \cref{thm:emp_risk_conv}, 
\begin{align}
\label{eq:conv-2}
\hat{R}_k(\hat{\embedding}_{n}; \dataset_{n}) - \hat{R}_k(\hat{\embedding}_{n+1}; \dataset_{n+1}) \convPr 0,
\end{align}
as $n \to \infty$.
The proof is completed by combining \cref{eq:conv-1,eq:conv-2}.
\end{proof}

\printbibliography[heading=subbibliography]
\end{refsection}

\end{document}